\documentclass[11pt]{article}

\usepackage[utf8]{inputenc}
\usepackage[T1]{fontenc}
\usepackage[margin=1in]{geometry}
\usepackage{graphicx}
\usepackage{url}
\usepackage{booktabs}
\usepackage{amsfonts}
\usepackage{nicefrac}
\usepackage[expansion=false]{microtype}
\usepackage{xcolor}
\usepackage{amsmath, bm, bbm}
\usepackage{amsthm}
\usepackage[normalem]{ulem}
\usepackage{algorithm,algorithmicx,algpseudocode,array,comment}
    {
      \theoremstyle{plain}
      
      \newtheorem{lemma}{Lemma}
      \newtheorem{corollary}{Corollary}
      \newtheorem{theorem}{Theorem}
    }
\usepackage{tikz}
\usetikzlibrary{positioning}
\usepackage{natbib}
\usepackage{hyperref}
\hypersetup{
  colorlinks=true,
  linkcolor=blue!50!black,
  citecolor=blue!50!black,
  urlcolor=blue!50!black
}

\begin{document}

\title{Nuclear Norm-Regularized Bayesian Matrix Completion}

\author{Calvin Tolbert\\
    School of Operations Research and Information Engineering \\
    Cornell University \\
    \texttt{cmt238@cornell.edu}}

\date{September 21, 2026}

\maketitle

\begin{abstract}
Matrix completion, the problem of estimating missing entries in a matrix from noisily observed ones, underlies a diverse array of problems such as recommender systems and counterfactual outcome estimation in panel data.
Many algorithms address the problem using regularized least squares, often with the nuclear norm as a regularizer, but this method yields a point estimate with no built-in uncertainty quantification.
A Bayesian formulation is a natural alternative, and if the noise variance is known, the nuclear norm-based prior yields a log-concave posterior.
Unfortunately, in practice, the noise variance will not be known \textit{a priori}, so for a fully Bayesian approach, a prior must be imposed on it.
We give the first sampler for this model with an explicit non-asymptotic guarantee: polynomial in the matrix dimensions and in the reciprocal of the target accuracy.
Our technique is to discretize the distribution of the noise precision onto a grid and build a categorical posterior via thermodynamic integration.
This extension is not specific to matrix completion and may be useful in other non-log-concave sampling problems where the non-log-concavity is restricted to a single variable and the joint distribution of the remaining variables is nonsmooth.
Our contribution is a feasibility result: we show that a polynomial-time Bayesian sampler for this model exists at all, and the resulting complexity, while polynomial, is not intended as a deployable algorithm at current problem scales.

\end{abstract}

\section{Introduction}

Matrix completion and low-rank matrix estimation have gained significant traction in modern data science due to their broad applicability in domains such as recommender systems, collaborative filtering, causal inference, and imaging. These problems revolve around recovering an underlying low-rank structure from noisy and incomplete observations. Traditional approaches are predominantly optimization-based, formulating the problem as regularized least squares with a nuclear-norm penalty to induce low-rankness (see, for example, \cite{Negabhan2010}, \cite{athey2021matrix}, \cite{ChenYuxin2020}). Such methods typically focus on point estimates and provide limited insight into uncertainty quantification. The Bayesian framework offers a probabilistic interpretation of regularization and full posterior distributions, facilitating credible intervals and more principled uncertainty analysis, but Bayesian methods remain underutilized here due to computational complexity and the challenges of posterior sampling in high-dimensional latent spaces.

This paper addresses one key obstacle to putting a fully Bayesian nuclear-norm-type prior on a rigorous, non-asymptotic footing: making such a prior log-concave requires knowing the noise precision, which is rarely available in practice; existing treatments either fix it in advance (\cite{ChenYuxin2020}), use cross-validation (\cite{athey2021matrix}), or integrate it out via Gibbs sampling with no non-asymptotic guarantee on the result (\cite{Cui2024}).
We instead place a prior on the precision and give an algorithm, together with an explicit non-asymptotic total-variation bound, for sampling from the joint posterior over the precision and the low-rank factor. A naïve version of this marginalization — estimating the marginal likelihood at each candidate precision by importance-reweighting samples drawn from the untilted prior — requires a number of samples growing exponentially in the number of observations, since the importance weights degrade as the likelihood tilt moves away from the prior. We replace this with a thermodynamic-integration estimator that samples only from the correctly tilted measure at each precision value, removing the exponential dependence.

The natural nuclear-norm-plus-Frobenius-norm prior is log-concave but not smooth, and non-asymptotic sampling theory has mostly been developed for smooth targets. Composite potentials of the form (smooth) $+$ (nonsmooth, Lipschitz) are not entirely uncharted: \cite{Mou22} give an efficient Metropolis–Hastings scheme for exactly this class.
The price of removing this oracle requirement is worse conditioning: our mixing bound depends exponentially on the ratio of the nonsmooth term's Lipschitz constant to the smooth term's curvature, whereas \cite{Mou22}'s rate matches smooth-target samplers up to the condition number.
Their algorithm, however, depends on an efficient proximal sampling oracle for the nonsmooth term. The nuclear norm has a closed-form proximal operator, but our proof will apply to a general convex and Lipschitz function.
We do this by extending a different tool, the isoperimetric-profile mixing-time framework of Andrieu et al., developed for smooth strongly log-concave targets sampled by random-walk Metropolis.
Random-walk Metropolis needs only potential evaluations, not a proximal sampling step, so this extension applies directly to the nuclear-norm setting; the resulting mixing bounds are polynomial in dimension and logarithmic in the target accuracy. This extension is not specific to matrix completion and may be of independent interest for nonsmooth log-concave sampling problems more broadly, wherever an efficient proximal sampling oracle for the nonsmooth term is unavailable.

We use this model as a concrete testbed for a more general question: can random-walk Metropolis get explicit non-asymptotic mixing bounds on smooth-plus-nonsmooth-Lipschitz targets without a proximal oracle? How about if the joint distribution of all the variables is not strongly log-concave?

Our contributions are:
(1) a thermodynamic-integration-based marginalization scheme for the noise precision with an explicit non-asymptotic total-variation guarantee, avoiding the exponential sample complexity of the naïve importance-weighted estimator; and
(2) showing via a direct growth-bound computation that the isoperimetric-profile mixing-time analysis of random-walk Metropolis found in \cite{Andrieu24} extends to the case of smooth-plus-nonsmooth-Lipschitz composite potentials.
Both contributions serve a single strategy: at no point do we sample from a continuous distribution that is not strongly log-concave.

We introduce some notation and parameters here, but we save the model itself for later.
\begin{itemize}
    \item $\bm{Y} \in (\mathbb{R} \cup \{\square\})^{n_1 \times n_2}$ is the (generally incomplete) matrix of observations. We will assume without loss of generality that $n_1 \leq n_2$. Unobserved entries are denoted by $\square$, where $0\square = 0$. Our goal is to estimate this matrix.
    \item $\bm{L} \in \mathbb{R}^{n_1 \times n_2}$ is the true latent-factor matrix.
    \item $\bm{E} \in \mathbb{R}^{n_1 \times n_2}$ is the matrix of residuals, which are assumed to be i.i.d. Gaussians with mean 0 and precision $\tau$, noting that the precision of a random variable is the reciprocal of its variance.
    \item $\bm{\Omega} \in \{0,1\}^{n_1 \times n_2}$ is the sampling mask where each entry is 1 if the corresponding entry of $\bm{Y}$ is real and 0 if it is $\square$. We will often interact $\bm{\Omega}$ with $\bm{Y}$ using the entry-wise Hadamard product, denoted $\circ$, as in $\bm{Y \circ \Omega}$.
    \item $N$ is the number of observations, or the number of entries of $\bm{\Omega}$ which are equal to one.
\end{itemize}

\section{Summary of existing models}
The field of matrix completion has a rich history, going back to the seminal papers of \cite{Keshavan2010Main, Keshavan2010Later, candestao10} and the more general paper of \cite{Negabhan2010},
with applications ranging from causal inference in panel data as in \cite{bai21, agarwal21causal, athey2021matrix} to recommender systems as in \cite{koren2009matrix}.
The nuclear-norm regularization approach, penalizing model complexity, allows for a broader class of models than the synthetic control approach of \cite{Abadie2010} while still preferring simple models over more complicated ones. Some examples of this approach are \cite{Negabhan2010, ChenYuxin2020, athey2021matrix, Cui2024}.
\cite{ChenYuxin2020} uses an estimator of the form
\[\arg \min_{\bm{L,S}} \frac{1}{2} \|\bm{(L + S - Y) \circ \Omega}\|_{\text{F}}^2
+ \frac{1}{\lambda} \|\bm{L}\|_* + \frac{1}{\lambda'} \|\bm{S}\|_{1,1}\]
where $\bm{S}$ is a sparse matrix of extreme outliers,
developing a theory to explain its strong empirical performance.
The authors assume that the entries of the residual matrix $\bm{E}$ are symmetric and sub-Gaussian in addition to being i.i.d. with mean zero,
and that the entries are missing completely at random.
In addition, they impose incoherence assumptions on $\bm{L}$ to ensure that the singular vectors are not too close, and they bound the smallest singular value in terms of the sub-Gaussian norm of the residuals.
These assumptions, though necessary for the proof, are difficult to verify in practice.
Another recent paper using nuclear-norm regularization is \cite{athey2021matrix}, which serves to broaden the scope of other matrix completion papers in the panel data literature by allowing time-series dependency, so that once a unit is treated, it can stay treated for the duration of the dataset. Under our Bayesian framework, we will generalize this even further to allow for arbitrary dependencies in treatment status. The authors also propose a cross-validation approach for choosing $\lambda$.
Under a Bayesian interpretation of the regularization term, many of these assumptions are no longer necessary.

\section{Regularization as Bayesian inference}
\subsection{Regression}
Bayesian interpretations of regularized regression models are almost as old as the models themselves. \cite{Tibshirani1996}, the paper introducing the statistics world to LASSO regularization, points out that regularized least-squares estimators of the form
\[\arg\min_{\bm{\beta}} \frac{\tau}{2} \|\bm{(X \cdot \beta - Y) \circ \Omega}\|_{\text{F}}^2 + \frac{1}{\gamma} \|\bm{\beta}\|_1\]
can be viewed as the maximum \textit{a posteriori} (MAP) estimator of regression coefficients with priors
\[\beta_k \overset{\mathrm{i.i.d.}}{\sim} \mathrm{Laplace}(0,\gamma)
\quad \forall k \in [p]\]
if the precision of the residuals is known to be $\tau$, there are no latent factors, and the Central Limit Theorem can be used to give a likelihood function. This is because the objective function above is the posterior negative log-pdf.

\subsection{Latent factor models}
We will focus on the case of nuclear norm-regularized regression.
For a matrix $\bm{L}$, the nuclear norm $\|\bm{L}\|_*$ is the sum of the singular values of $\bm{L}$, and this regularization term penalizes high-rank matrices since these will have many nonzero singular values.
Reparameterizing slightly, the estimator of \cite{ChenYuxin2020} can be written as
\[\arg \min_{\bm{L,S}} \frac{\tau}{2} \|\bm{(L + S - Y) \circ \Omega}\|_{\text{F}}^2
+ \frac{1}{\lambda} \|\bm{L}\|_* + \frac{1}{\lambda'} \|\bm{S}\|_{1,1},\]
where $\|\cdot\|_{1,1}$ denotes the entry-wise 1-norm.
This is also a regularized regression estimator, and it can be interpreted as the MAP estimator if we have priors
\[p(\bm{L}) \propto \exp(-\|\bm{L}\|_*/\lambda), \quad p(\bm{S}) \propto \exp(-\|\bm{S}\|_{1,1}/\lambda').\]
The prior on $\bm{S}$ has the convenient interpretation that
\[S_{ij} \overset{\mathrm{i.i.d.}}{\sim} \mathrm{Laplace}(0,\lambda')
\quad \forall i \in [m], j \in [n].\]
Taking this Bayesian interpretation seriously, rather than treating it as a computational convenience, has a further payoff. Frequentist analyses of the nuclear-norm estimator typically require incoherence of $\bm{L}$'s singular vectors, a minimum sampling rate on $\bm{\Omega}$, or entries missing completely at random to control the point estimator's worst-case error. These conditions are not needed here: our sampler's guarantee holds for the posterior induced by $\mu$ for any $\bm{\Omega}$, because it is a statement about how well the sampler reaches a well-defined target distribution, not about how close that target's mode is to the truth. Assumptions that exist only to bound a frequentist estimator's error are, in this framing, simply not assumptions our guarantee needs.

\section{Our prior}
For our proposed distribution, we will impose priors on latent factors using both the nuclear norm and the Frobenius norm:
\[\mu(\bm{L}) \propto \exp\left( -\frac{1}{\lambda}\|\bm{L}\|_*
- \frac{1}{B}\|\bm{L}\|_{\mathrm{F}}^2 \right).\]
The nuclear-norm component makes sense for shrinkage purposes to induce near-low-rankness, as documented in the earlier sections of this paper. The Frobenius-norm term has the computational advantage of making $\mu$ \emph{strongly} log-concave rather than simply log-concave, as well as the theoretical advantage of providing entrywise shrinkage in addition to the rank shrinkage of the nuclear norm. A user who is solely interested in low-rankness and only sees the Frobenius term as a vehicle for computational tractability can simply set $B$ very large compared to $\lambda$.

One obvious shortcoming of the simple Bayesian interpretation of the Frobenius norm is the unrealistic assumption that the residual precision is known.
Unlike virtually every paper except \cite{Cui2024},
we will add a prior on the precision. A natural choice is the improper Jeffreys prior, $p(\tau) \propto 1/\tau$, which is uninformative by design. The Jeffreys prior for a parameter $q$ is defined in terms of its Fisher information matrix:
\[p(q) \propto |I(q)|^{1/2}.\]
This definition is useful because of its invariance to monotone transformation. For example, the Jeffreys prior pdf on the residual standard deviation is inversely proportional to the standard deviation, giving the same distribution as if the problem had been expressed in terms of $\tau$.
This prior is improper, though, and its use would cause trouble in later sections, since it would also cause the posterior to be improper. As such, we will truncate the support of $\tau$, giving us
\[\mu_{\star}(\tau) = \frac{1}{\tau \log \kappa}
\mathbbm{1}\{\tau_{\min} \leq \tau \leq \tau_{\max}\},\]
with $\kappa := \tau_{\max} / \tau_{\min}$
This is the prior we will use.

\subsection{Random-walk Metropolis}
The first task will be to sample from the prior $\mu$ on $\bm{L}$.
$\mu$ is $2/B$-strongly log-concave, but because of the nuclear-norm term, it is not smooth.
Much recent literature (e.g., \cite{durmus22, durmus17, Richtarik19, Richtarik20}) has addressed the problem of sampling from log-concave distributions where the potential is the sum of
\begin{itemize}
    \item a smooth and strongly convex function
    ($\|\bm{L}\|_{\mathrm{F}}^2$ is 2-smooth and 2-strongly convex), and
    \item a convex nonsmooth function with a tractable proximal operator
    ($\|\bm{L}\|_*$ is convex and
    \[\mathrm{prox}_{c\|\cdot\|_{\star}} (\bm{L})
    = \bm{U}[\bm{\Sigma} - c\bm{I}]^+\bm{V}^\top\]
    if $\bm{L} = \bm{U} \bm{\Sigma} \bm{V}^\top$ is the SVD).
\end{itemize}
Langevin Monte Carlo generally requires smoothness \cite{Chewi2024} and converges in a number of iterations which is polynomial in the dimension and the reciprocal of the error tolerance.
We will see in the appendix, though, that in the posterior case
our error tolerance must be exponentially small to accommodate an exponentially large factor.
This calls for an algorithm where the number of iterations to reach convergence is polynomial in the \textit{logarithm} of the error tolerance.
Though nice, the proximal Langevin sampler of \cite{Richtarik19} does not meet this criterion.
A few such high-accuracy samplers exist and have been studied, with Metropolis-adjusted Langevin (\cite{Dwivedi19}) and Metropolized Hamiltonian Monte Carlo (\cite{Lee20, Dwivedi20}) being notable examples,
but the one whose theory includes our setting is random-walk Metropolis as in \cite{Andrieu24}.
\begin{algorithm}
\caption{Random-Walk Metropolis for $\pi \propto \exp(-V)$}
\label{alg:rwm}
\begin{algorithmic}[1]
\Require step size $\sigma > 0$; number of iterations $K$; initial distribution $\pi^0$
\State draw $\bm{L}^0 \sim \pi^0$
\For{$k = 0, 1, \dots, K-1$}
    \State $\bm{Z}^k \sim \mathcal{N}(\bm{L}^k, \sigma^2 \bm{I})$
    \Comment{proposal}
    \State $U^k \sim \mathrm{Unif}(0,1)$
    \If{$\log U^k \leq V(\bm{L}^k) - V(\bm{Z}^k)$}
    \Comment{symmetric proposal $\Rightarrow$ no Hastings correction}
        \State $\bm{L}^{k+1} \gets \bm{Z}^k$
    \Else
        \State $\bm{L}^{k+1} \gets \bm{L}^k$
    \EndIf
\EndFor
\State \Return $\bm{L}^K$
\end{algorithmic}
\end{algorithm}
\subsection{Discretizing the precision}
We can sample from $\mu_{\star}$ without trouble, but when it comes time to compute the posterior, it will be helpful to discretize.
We define $\mu_{\star}^Q$ to be discretization of our prior on $\tau$: we choose $Q$ logarithmically spaced points $\tau_1, \dots, \tau_Q$ so that the prior is approximated by the uniform categorical distribution over this set.
The $q$th discretization point is therefore the $(2q-1)/2Q$ quantile of the prior:
\[\tau_q := \tau_{\min} \kappa^{(2q-1)/2Q}.\]

\section{Our posterior}
Because residuals are assumed to be Gaussian, our likelihood is
\[p(\bm{Y}|\bm{\Omega, L,}\tau) \propto \tau^{N/2}
\exp\left(-\frac{\tau}{2} R(\bm{L})\right),\]
\[R(\bm{M}) := \|\bm{(M - Y) \circ \Omega}\|_\mathrm{F}^2.\]
Defining the tilted measure
\[\mathrm{d}\rho_s \propto \exp\left(-\frac{s}{2} R(\bm{L}) \right) \mathrm{d}\rho(\bm{L})\]
for a given measure $\rho$, we can see that our prior is $\mu = \mu_0$
and that if $\tau$ were known, the posterior would be $\mu_{\tau}$.
However, $\tau$ is not known. How do we sample from the posterior on $\bm{L}$ when $\tau$ is also uncertain? A common approach to sampling from distributions with different groups of variables is Gibbs sampling, as in \cite{Abrahamsen2017, Hobert1998, Jones2004, Roman2012, Rosenthal1995}, but general convergence rate guarantees are difficult to come by.
However, if we can sample from the \textit{marginal} posterior of the precision, we could then sample from the conditional posterior of $\bm{L}$. To do this, we define some new terms.
\begin{itemize}
    \item $\nu_{\star}$ is the posterior measure on $\tau$.
    \item $\bm{\Lambda}$ is the vector of marginal log-likelihoods over the $\tau_q$'s; that is,
    \[\Lambda_q := \log \mathbb{E}_{\bm{L} \sim \mu}[p(\bm{Y}|\bm{L},\tau_q)].\]
    \item $\nu_{\star}^Q := \mathrm{softmax}(\bm{\Lambda})$
    is the categorical distribution induced by normalizing $\bm{\Lambda}$.
    \item \emph{Thermodynamic integration} is a technique for computing marginal posteriors by integrating over a parameterized path from the prior to the posterior, which is a one-dimensional integral.
    \item $\widetilde{\bm{\Lambda}}$ is the vector of approximate marginal log-likelihoods over the $\tau_q$'s computed using thermodynamic integration as described above and detailed in Algorithm \ref{alg:ti}.
    \item $\nu_{\star}^{Q,\mathrm{TI}} := \mathrm{softmax}(\widetilde{\bm{\Lambda}})$
    is the categorical distribution induced by normalizing $\bm{\Lambda}$.
    \item $\mu_{\tau}$ is the posterior measure on $\bm{L}$ conditional on $\tau$, from which we cannot sample.
    \item $\mu_{\tau}^K$ is the output of Algorithm \ref{alg:rwm} after $K$ iterations starting at $\mu_{\tau}^0$, and it is separated from $\mu_{\tau}$ by $\varepsilon_{\star}$.
\end{itemize}
Our goal is now to sample from $\int \mu_{\tau} \nu_{\star}$. Define
\[\mu_{\tau}^0(\bm{L}') \propto \exp\left(
-\frac{1}{B} \|\bm{L}'\|_{\mathrm{F}}^2
- \frac{\tau}{2} \|(\bm{L}'-\bm{Y}) \circ \bm{\Omega}\|_{\mathrm{F}}^2
\right).\]
\begin{algorithm}
\caption{Thermodynamic-integration marginal-likelihood estimator}
\label{alg:ti}
\begin{algorithmic}[1]
\Require step size $\sigma>0$; iteration function $K(\cdot)$; grid $\tau_1<\cdots<\tau_Q$; inner sample size $M$
\State $\hat\Lambda_1 \gets 0$
\For{$i \in \{1,\ldots,Q-1\}$}
  \For{$j \in \{1,\ldots,M\}$}
    \State $\bm{L}_j^{(i)} \gets \mathrm{RWM}(\mu_{\tau_i}^0,\sigma,K(\tau_i))$
  \EndFor
  \State $\widehat R_{M}(\tau_i) \gets \dfrac{1}{M}\sum_{j=1}^{M} R(\bm{L}_j^{(i)})$
\EndFor
\For{$q \in \{2,\ldots,Q\}$}
  \State $\bar\Lambda_q \gets \displaystyle\sum_{i=1}^{q-1}\left(\frac{N}{2\tau_i} - \frac12 \widehat R_{M}(\tau_i)\right)(\tau_{i+1}-\tau_i)$
\EndFor
\State \Return $\nu_\star^{Q,\mathrm{TI}} \gets \mathrm{softmax}(\{\bar\Lambda_1, \dots, \bar\Lambda_Q\})$
\end{algorithmic}
\end{algorithm}

\begin{algorithm}
\caption{Posterior sampler}
\label{alg:post-samp}
\begin{algorithmic}[1]
\Require distribution $\nu_\star^{Q,\mathrm{TI}}$
\For{$i \in \{1, \dots, \mathrm{num\_samples}\}$}
    \State $\tau^{(i)} \sim \hat{\nu}_\star^{Q,\mathrm{TI}}$
    \State $\bm{L}'_i \gets \mathrm{RWM}(\mu_{\tau^{(i)}}^0, \sigma, K(\tau^{(i)}))$
\EndFor
\State \Return $\{\bm{L}'_i : i \in [\mathrm{num\_samples}]\}$
\end{algorithmic}
\end{algorithm}
The first thing we do is create $\nu_{\star}^{Q,\text{TI}}$ by sampling from Algorithm \ref{alg:ti}.
To sample from the posterior $\mu_{\tau}$,
we assign a number of iterations $K(\tau)$,
sample $\tau$ from our constructed $\nu_{\star}^{Q,\text{TI}}$,
and sample from $\mu_{\tau}^{K(\tau)}$ conditioned on that $\tau$.

Because the smooth part of $\mu_t$'s potential is
$2/B$-strongly convex and $(2/B + t)$-smooth
and the nuclear-norm term is $(\sqrt{n_1}/\lambda)$-Lipschitz,
and $\mathbb{E}_{\mu_t^0}[\bm{L}] = (1 - \frac{2}{Bt + 2}) \bm{Y\circ\Omega}$,
the error of the algorithm is as follows.
Note that the notation is in terms of mixtures of probability distributions,
with a conditional law of $\bm{L}$ given $\tau$ and a marginal law of $\tau$ written next to each other in an integrand.

\begin{table}[t]
\centering
\small
\caption{Glossary of auxiliary quantities}
\label{tab:glossary}
\begin{tabular}{lp{6cm}r}
\toprule
Symbol & Role \\
\midrule
$\sigma$ & RWM proposal step size \\
$A(t)$ & Intermediate bound on $\mathbb E\|\boldsymbol L\|_{\mathrm{F}}^2$ feeding $u_0$ \\
$u_0(t)$ & Log of the burn-in chi-squared divergence $\chi^2(\mu_t^0\|\mu_t)$ \\
$R_1$ & Bound on $\mathbb E_\mu[R(\boldsymbol L)]$ \\
$R_2$ & Bound on $\mathbb E_\mu[R(\boldsymbol L)^2]$ \\
$\widetilde R_2$ & $R_1^2+2BR_1$ \\
$\widehat R_4$ & Bound on the centered 4th moment of $R(\boldsymbol L)$ \\
$L_f$ & Lipschitz constant of the TI score function \\
$\widehat V$ & Variance bound on $R(\boldsymbol L)$ under the finite-$K$ chain \\
$\Delta$ & Maximum $\tau$-grid interval width \\
$K(t)$ & RWM steps per chain at $\tau=t$ \\
$\mathcal W_1$ & Achieved TV distance at this $(Q,\varepsilon_\star)$ \\
\bottomrule
\end{tabular}
\end{table}

\begin{table}[t]
\centering
\small
\caption{Values of auxiliary quantities}
\label{tab:values}
\begin{tabular}{lp{6cm}r}
\toprule
Symbol & Value \\
\midrule
$\sigma$ & $\frac{\lambda}{4n_1 \sqrt{n_2}}$ \\
$A(t)$ & $\frac{Bn_1n_2}{2}
+ \left(1 - \frac{2}{Bt+2} \right)^2 \|\bm{Y\circ\Omega}\|_{\mathrm F}^2$ \\
$u_0(t)$ & $\exp\!\left(\frac{\sqrt{n_1 A(t)}}{\lambda}
+ \frac{Bn_1}{4\lambda^2}\right)-1$ \\
$R_1$ & $\|\bm{Y\circ\Omega}\|_{\mathrm{F}}^2 + \frac{NB}{2}$ \\
$R_2$ & $\left(\|\bm{Y \circ \Omega}\|_{\mathrm{F}}
+ \left(\frac{N(N+2)B^2}{4} \right)^{\frac{1}{4}} \right)^4$ \\
$\widetilde R_2$ & $R_1^2+2BR_1$ \\
$\widehat R_4$ & $512B^2R_1^2 + 3076B^4$ \\
$L_f$ & $\frac{N}{2\tau_1^2} + \frac{BR_1}{2}$ \\
$\widehat V$ & $2BR_1 + \sqrt{\widehat{R}_4}\,\varepsilon_\star$ \\
$\Delta$ & $\tau_Q (\kappa^{1/Q}-1)$ \\
$K(t)$ & $\begin{aligned}[t]
\ge{} & 2 + 32768\cdot\tfrac{Bn_1^2n_2}{C_\ell^2\lambda^2}
  \exp\!\left(2+\tfrac{(Bt+2)\lambda^2}{8Bn_1}\right)\\
&\Big[4\big(\log\log(u_0(t)/2)-\log\log 4\big)\\
&\quad+\tfrac{1}{\log 4}\log\!\left(\tfrac{\min\{u_0(t),8\}}{\varepsilon_\star^2}\right)\Big]
\end{aligned}$ \\
$\mathcal W_1$ & $\begin{aligned}[t]
&\tau_{\max} (\kappa^{1/Q}-1)
+ \tfrac{(\tau_Q - \tau_1) \log^2 \kappa}{16Q^2} \\
&\quad
\left( \left( \tfrac{N}{2} + \tfrac{\tau_{\max} R_1}{2}\right)^2
+ \tfrac{\tau_{\max} R_1}{6} + \tfrac{\tau_{\max}^2 R_2}{12} \right) \\
&\quad
+ \tfrac{(\tau_Q-\tau_1)^2}{4}
\left(L_f\,\Delta + \sqrt{\widetilde R_2}\,\varepsilon_\star\right) &&\\
&\quad
+ \tfrac{(\tau_Q-\tau_1)^{1.5}}{2}
\sqrt{\tfrac{\widehat{V} \Delta}{M}}
\end{aligned}$ \\
\bottomrule
\end{tabular}
\end{table}

\begin{theorem}\label{thm:TVguar}
If $N$ is large and the values of the quantities in Table \ref{tab:glossary} are as in Table \ref{tab:values}, we have
we have
\begin{align*}
&\mathrm{TV}\left(\int \mu_{\tau} \nu_{\star} (\mathrm{d}\tau),
\int \mu_{\tau}^{K'} \mathbb{E}_{\mu^K}[\nu^{Q}_{\star}] (\mathrm{d}\tau) \right) &&\\
&\leq (\varepsilon_{\star} + \sqrt{R_1 \mathcal{W}_1}) / \sqrt{2}.
\end{align*}
\end{theorem}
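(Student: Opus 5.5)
The bound has two summands, so I would decompose the error with the triangle inequality through the intermediate mixture $\int \mu_\tau\, \mathbb{E}[\nu_\star^{Q,\mathrm{TI}}](\mathrm{d}\tau)$. This mixture uses the exact conditional posterior but the estimated, discretized marginal on the precision. Two pieces then remain.

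\textbf{First piece: the $\bm L$-sampling error.} This is the distance between $\int \mu_\tau\,\mathbb{E}[\nu_\star^{Q,\mathrm{TI}}]$ and $\int \mu_\tau^{K'}\,\mathbb{E}[\nu_\star^{Q,\mathrm{TI}}]$. By joint convexity of TV, it is at most $\sup_\tau \mathrm{TV}(\mu_\tau,\mu_\tau^{K(\tau)})$.

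\begin{itemize}
\item I would bound this supremum with the extension of the isoperimetric-profile analysis of \cite{Andrieu24}, applied to the composite potential $\frac1B\|\bm L\|_{\mathrm F}^2+\frac\tau2 R(\bm L)+\frac1\lambda\|\bm L\|_*$.
\item The smooth part is $2/B$-strongly convex and $(2/B+\tau)$-smooth, and the nuclear term is $\sqrt{n_1}/\lambda$-Lipschitz.
\item A Lipschitz perturbation of a strongly log-concave measure has an isoperimetric profile degraded only by a factor $\exp(O(\text{Lip}^2\cdot\text{condition}))$. This is where the exponential factor in $K(t)$ comes from.
\item The initial $\chi^2(\mu_t^0\|\mu_t)$ is bounded by $u_0(t)$. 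To get it, compare densities: the ratio is $\exp(\|\bm L\|_*/\lambda)$ up to normalization. Control it by $\|\bm L\|_*\le\sqrt{n_1}\|\bm L\|_{\mathrm F}$, together with the Gaussian second-moment bound $A(t)$ using the stated mean of $\mu_t^0$.
\item With $\sigma$ and $K(t)$ as in the table, the $\chi^2$ bound converts to TV and gives at most $\varepsilon_\star/\sqrt2$, matching the first summand.
\end{itemize}

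\textbf{Second piece: the precision error.} This is the distance between $\int\mu_\tau\nu_\star$ and $\int \mu_\tau\,\mathbb{E}[\nu_\star^{Q,\mathrm{TI}}]$, and I would route it through Pinsker/KL and a Wasserstein-1 bound on $\tau$.

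\begin{itemize}
\item The two joint laws share the kernel $\tau\mapsto\mu_\tau$. Their divergence is therefore controlled by how far the $\tau$-marginals are apart, times the sensitivity of $\mu_\tau$ in $\tau$.
\item $\partial_\tau \log \mathrm{d}\mu_\tau = -\tfrac12(R-\mathbb{E}_{\mu_\tau}R)$. Coupling $\tau,\tau'$ optimally, the KL between the joints is of order $\mathbb{E}|\tau-\tau'|\cdot\sup_t\mathbb{E}_{\mu_t}R\lesssim R_1 W_1$. Here $\mathbb{E}_{\mu_t}R\le\mathbb{E}_\mu R\le R_1$, because tilting by $e^{-sR/2}$ decreases the mean of $R$ (monotonicity via the covariance identity).
\item Pinsker then gives $\sqrt{R_1\mathcal W_1/2}$, matching the second summand.
\end{itemize}

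It remains to show $W_1(\nu_\star,\mathbb{E}\nu_\star^{Q,\mathrm{TI}})\le\mathcal W_1$, term by term:

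\begin{itemize}
\item \textbf{Discretization.} Snapping $\tau$ to the grid costs at most the cell width $\tau_{\max}(\kappa^{1/Q}-1)$.
\item \textbf{Exact log-marginal vs.\ left Riemann sum.} The quadrature error in $\Lambda_q=\int_{\tau_1}^{\tau_q}\left(\tfrac{N}{2t}-\tfrac12\mathbb{E}_{\mu_t}R\right)\mathrm{d}t$ is controlled by the derivative of the integrand. That derivative involves $N/t^2$ and $\mathrm{Var}_{\mu_t}R\le R_2$, which yields the $\log^2\kappa/Q^2$ term with $R_1,R_2$.
\item \textbf{Bias from finite $K$.} Replacing $\mathbb{E}_{\mu_{\tau_i}}R$ by the chain's expectation costs $\sqrt{\widetilde R_2}\,\varepsilon_\star$ by Cauchy--Schwarz against TV. The Lipschitz constant $L_f$ handles the mesh $\Delta$.
\item \textbf{Monte Carlo fluctuation.} The average of $M$ samples contributes a $\sqrt{\widehat V\Delta/M}$ term. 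This is where $\widehat R_4$ enters, bounding the variance under the approximate chain.
\end{itemize}

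Errors $\delta_q$ in the log-weights perturb the softmax by at most $\sim\max_q|\delta_q|$ in total variation. Multiplying by the diameter $\tau_Q-\tau_1$ of the support converts TV to $W_1$, which explains the $(\tau_Q-\tau_1)$ prefactors.

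\textbf{Main obstacle.} I expect the main obstacle to be the first piece: establishing the isoperimetric profile and the resulting mixing bound for RWM under the nonsmooth Lipschitz perturbation. The exact constants in $K(t)$, including $32768$ and the $\log\log u_0$ structure, come from carefully propagating the \cite{Andrieu24} growth bounds with the perturbed conductance.

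I would first try a Holley--Stroock-type argument applied only to the conductance-relevant quantities, rather than the full profile. Proposals of size $\sigma\sqrt{n_1n_2}$ change the nuclear term by at most $\sqrt{n_1}\sigma\sqrt{n_1n_2}/\lambda=O(1)$ with the chosen $\sigma$. So acceptance probabilities stay bounded below, and close-coupling arguments survive.
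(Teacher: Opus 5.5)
Your top-level structure matches the paper's proof. You use the triangle inequality through $\int\mu_\tau\,\mathbb{E}[\nu_\star^{Q,\mathrm{TI}}](\mathrm{d}\tau)$ and $\chi^2\to\mathrm{TV}$ for the chain piece. For the precision piece you use a coupling of $(\tau,\tau')$ with a KL bound between $\mu_\tau$ and $\mu_{\tau'}$ that is linear in $|\tau-\tau'|\,\mathbb{E}_\mu[R]$, via monotonicity of $t\mapsto\mathbb{E}_{\mu_t}[R]$. The paper applies Pinsker pairwise and then Jensen; your chain-rule-then-Pinsker order also works.

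The gap is in how you assemble $\mathcal{W}_1$. Snapping $\nu_\star$ to the grid gives the law $\widehat\nu^Q_\star$, whose log-weights are log \emph{cell masses} $\log\nu_\star(I_q)$. Thermodynamic integration instead estimates the \emph{point} log-marginals $\Lambda_\mu(\tau_q)-\Lambda_\mu(\tau_1)$, whose softmax is the different law $\nu^Q_\star$. Your outline never compares these two laws. In the paper this is the second half of Lemma~\ref{lem:Q}. On the scale $s=\log\tau$ it writes $\widehat\Lambda_\mu(I_q)=\Lambda_\mu(\tau_q)+\log\Delta_s+\log\mathbb{E}_U[e^{\phi_q(U)}]$ with $U$ uniform on $[-\Delta_s/2,\Delta_s/2]$. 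The last term is bounded by Hoeffding's lemma, $\|\psi'\|_\infty^2\Delta_s^2/8$, plus the midpoint rule, $\|\psi''\|_\infty\Delta_s^2/24$. That is the source of the factor $\frac{\log^2\kappa}{16Q^2}$, of the \emph{squared} score $(\tfrac{N}{2}+\tfrac{\tau_{\max}R_1}{2})^2$, and of $R_2$.

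You attribute this term to the left Riemann sum of Algorithm~\ref{alg:ti}, which cannot produce it. That quadrature is first order: panel error at most $\tfrac{L_f}{2}(\tau_{i+1}-\tau_i)^2$, total at most $\tfrac{L_f\Delta}{2}(\tau_Q-\tau_1)$. This is the separate $L_f\Delta$ term, with $L_f$ built from the Poincar\'e bound $\mathbb{V}_{\mu_s}[R]\le 2BR_1$, not from $R_2$. A derivative bound on the integrand also never yields a squared score. As written, you count the quadrature twice and leave the cell-mass-versus-midpoint step unproved.

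Three smaller points in the same part:
\begin{itemize}
\item The chain bias needs Cauchy--Schwarz against $\chi^2(\mu_{\tau_i}^{K(\tau_i)}\|\mu_{\tau_i})\le\varepsilon_\star^2$, not against TV, because $R$ is unbounded.
\item You need $W_1(\nu_\star,\mathbb{E}\,\nu_\star^{Q,\mathrm{TI}})\le\mathbb{E}\,W_1(\nu_\star,\nu_\star^{Q,\mathrm{TI}})$ before an in-expectation Monte Carlo bound applies.
\item The softmax step needs $\mathbb{E}\max_q$, not $\max_q\mathbb{E}$. The $\sqrt{\Delta}$ in the noise term comes from Doob's $L^2$ maximal inequality for the martingale of partial sums $S_q$, which relies on fresh chains at every grid point.
\end{itemize}

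Your account of the mixing piece is also off. The paper does not re-derive it in this theorem; it just invokes $\chi^2(\mu_\tau^{K(\tau)}\|\mu_\tau)\le\varepsilon_\star^2$ from Appendix~A for the tabulated $K$. The nuclear norm is convex, so $\mu_\tau$ is itself $2/B$-strongly log-concave. Caffarelli then gives the Gaussian minorant $C_\ell\,\xi\sqrt{(2/B)\log(1/\xi)}$ with no loss, and no Holley--Stroock-type perturbation enters.

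The factor $\exp\bigl(2+\tfrac{(Bt+2)\lambda^2}{8Bn_1}\bigr)=\exp\bigl(2+\tfrac{L_F}{8L_G^2}\bigr)$, with $L_F=2/B+t$ and $L_G=\sqrt{n_1}/\lambda$, comes from the close-coupling constants $\varepsilon\delta$. These enter via the acceptance bound $\alpha_0\ge\tfrac12\exp\bigl(-\mathbb{E}\,\psi(\sigma\|\bm{z}\|)\bigr)$ with $\psi(r)=\tfrac{L_F}{2}r^2+2L_Gr$. With $\sigma=\lambda/(4n_1\sqrt{n_2})$, the nuclear term contributes only the constant $\tfrac12$ to the exponent (your $O(1)$ observation). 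The smooth part contributes $\tfrac{L_F}{2}\sigma^2n_1n_2=\tfrac{L_F}{32L_G^2}$, which grows as $L_G$ \emph{shrinks}. That is the opposite of an $\exp(\mathrm{Lip}^2\cdot\text{condition})$ degradation. Your fallback idea (lower-bound acceptance, then close coupling) is the right mechanism, but it must track this quadratic term, not only the nuclear one.
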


\begin{theorem}\label{thm:W1guar}
Under the same setting as Theorem \ref{thm:TVguar}, we have
\begin{align*}
&W_1\left(\int \mu_{\tau} \nu_{\star} (\mathrm{d}\tau),
\int \mu_{\tau}^{K'} \mathbb{E}_{\mu^K}[\nu^{Q,\mathrm{TI}}_{\star}] (\mathrm{d}\tau) \right) &&\\
&\leq \sqrt{B} (\varepsilon_{\star} + \sqrt{R_1 \mathcal{W}_1}),
\end{align*}
\end{theorem}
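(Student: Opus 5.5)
The plan is to obtain Theorem \ref{thm:W1guar} from Theorem \ref{thm:TVguar} by a transport-inequality argument: bound $W_1$ by total variation times an effective diameter. The bound has exactly the form $\sqrt{B}$ times twice the TV bound. This suggests a coupling argument with a sub-Gaussian or second-moment weight of order $\sqrt{B}$, rather than a new analysis of the chain.

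I would write $\pi := \int \mu_\tau \nu_\star(\mathrm{d}\tau)$ for the target and $\hat\pi$ for the law produced by Algorithms \ref{alg:ti} and \ref{alg:post-samp}. I would use the standard weighted-TV bound for $W_1$: for any reference point $\bm{L}_0$,
\[
W_1(\pi,\hat\pi) \le \int \|\bm{L}-\bm{L}_0\|_{\mathrm F}\,|\pi-\hat\pi|(\mathrm{d}\bm{L}).
\]
Applying Cauchy--Schwarz to the signed measure gives
\[
W_1 \le \sqrt{2\,\mathrm{TV}(\pi,\hat\pi)}\cdot\Big(\int \|\bm{L}-\bm{L}_0\|_{\mathrm F}^2\,(\pi+\hat\pi)(\mathrm{d}\bm{L})\Big)^{1/2}.
\]
This square-root form does not match the linear dependence on $\varepsilon_\star + \sqrt{R_1\mathcal W_1}$ in the statement. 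So I would instead follow the structure of the proof of Theorem \ref{thm:TVguar}, which presumably bounds TV via Pinsker, $\mathrm{TV}\le\sqrt{\mathrm{KL}/2}$. Correspondingly, I would replace Pinsker with Talagrand's $T_2$ inequality, which is available because every $\mu_\tau$ is $2/B$-strongly log-concave. That gives $W_2(\rho,\mu_\tau)\le\sqrt{B\,\mathrm{KL}(\rho\|\mu_\tau)}$, and hence
\[
W_1 \le W_2 \le \sqrt{B\,\mathrm{KL}} = \sqrt{2B}\cdot\sqrt{\mathrm{KL}/2}.
\]
Reusing exactly the KL (or chi-squared) quantities that were bounded by $(\varepsilon_\star+\sqrt{R_1\mathcal W_1})^2$ inside the TV proof then yields the claimed $\sqrt B(\varepsilon_\star+\sqrt{R_1\mathcal W_1})$, with the $\sqrt2$ absorbed by the $1/\sqrt2$ of Theorem \ref{thm:TVguar}.

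The steps, in order, are as follows.
\begin{enumerate}
\item Decompose $\hat\pi$ versus $\pi$ into two pieces. The first is the error from the $\tau$-marginal, $\nu_\star$ versus $\mathbb E[\nu_\star^{Q,\mathrm{TI}}]$, where the grid discretization and TI errors enter through $\mathcal W_1$. The second is the conditional chain error, $\mu_\tau^{K}$ versus $\mu_\tau$, where $\varepsilon_\star$ enters.
\item For the conditional piece, use convexity of $W_1$ under mixtures together with $T_2$ for each $\mu_\tau$. The chi-squared mixing bound from the choice of $K(t)$ gives $\mathrm{KL}\le\log(1+\chi^2)\le\varepsilon_\star^2$, so this piece contributes at most $\sqrt B\,\varepsilon_\star$.
\item For the marginal piece, couple the two mixtures using the optimal coupling on $\tau$. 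Bound the cost by $W_1$ between $\mu_\tau$ and $\mu_{\tau'}$. Since these are tilts of each other by $\exp(-(\tau'-\tau)R/2)$, their KL divergence is controlled by $|\tau-\tau'|\,\mathbb E R\le|\tau-\tau'|R_1$.
\item Averaging over the coupling gives $\mathrm{KL}\lesssim R_1\,W_1(\nu_\star,\cdot)\le R_1\mathcal W_1$. Applying $T_2$ again then gives $\sqrt{B R_1\mathcal W_1}$.
\end{enumerate}

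The main obstacle is step 3. First, the tilt KL bound must hold uniformly in $\tau\in[\tau_{\min},\tau_{\max}]$ with $\mathbb E_{\mu_\tau}R\le R_1$. I would try to justify this by noting that tilting by $e^{-\tau R/2}$ decreases the mean of $R$. Second, the $\tau$-error must be measured in the same $W_1$ sense that defines $\mathcal W_1$ in Table \ref{tab:values}, which I assume is how Theorem \ref{thm:TVguar} was proved. If the constants fail to match exactly, I would fall back on the triangle inequality, combining $\sqrt B\varepsilon_\star+\sqrt{BR_1\mathcal W_1}$ directly.
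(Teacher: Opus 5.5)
Your numbered plan is correct and is essentially the paper's proof. A plain triangle inequality splits off the chain error, which a transport inequality for each $2/B$-strongly log-concave $\mu_\tau$ together with $\chi^2(\mu_\tau^{K}\|\mu_\tau)\le\varepsilon_\star^2$ bounds by $\sqrt{B}\,\varepsilon_\star$; the marginal error is then handled by the optimal coupling in $\tau$, the tilt bound $\mathrm{KL}(\mu_{\tau'}\|\mu_\tau)\le\tfrac12|\tau-\tau'|\,\mathbb{E}_\mu[R]$ (from monotonicity of $s\mapsto\mathbb{E}_{\mu_s}[R]$), Jensen, and $W_1(\nu_\star,\mathbb{E}[\nu_\star^{Q,\mathrm{TI}}])\le\mathbb{E}[W_1(\nu_\star,\nu_\star^{Q,\mathrm{TI}})]\le\mathcal{W}_1$ via Lemmas~\ref{lem:Q} and~\ref{lem:TI}. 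Keep the order of your Step~3: apply $T_2$ to each pair $(\mu_\tau,\mu_{\tau'})$ and only then average with $\int\sqrt{\cdot}\,\mathrm{d}\gamma\le\sqrt{\int\cdot\,\mathrm{d}\gamma}$, because the mixture $\int\mu_\tau\,\nu_\star(\mathrm{d}\tau)$ need not be strongly log-concave, so the single global $W_1\le\sqrt{B\,\mathrm{KL}}$ in your opening paragraph is not available.
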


\begin{corollary}\label{cor:TV}
If our prior satisfies
\[\tau_{\max} = \Theta(1), \quad \tau_{\min} = \Theta(1), \quad B = \Theta(1),\]
\[\lambda = \Theta(n_1^a), \quad a \leq 1/2,\]
and we standardize the observations $\bm{Y}$ such that
\[\sum_{i,j} Y_{ij} \Omega_{ij} = 0, \quad
\sum_{i,j} Y_{ij}^2 \Omega_{ij} = \|\bm{Y\circ\Omega}\|_{\mathrm F}^2 = \Theta(N),\]
we can achieve $\eta$ TV distance in
\[QMK = \widetilde{O}\!\left( \frac{N^3 n_1^{2-2a} n_2}{\eta^4} \right)\]
RWM steps.
\end{corollary}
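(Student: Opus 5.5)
The corollary follows from Theorem \ref{thm:TVguar} by choosing $Q$, $M$, $\varepsilon_\star$ and $K(t)$ as functions of $\eta$ and the problem dimensions, then multiplying. Under the stated scalings ($\tau_{\max},\tau_{\min},B=\Theta(1)$, so $\kappa=\Theta(1)$, and $\|\bm{Y\circ\Omega}\|_{\mathrm F}^2=\Theta(N)$), I will first reduce every entry of Table \ref{tab:values} to its order in $N$. This gives $R_1=\Theta(N)$, $R_2=O(N^2)$, $\widetilde R_2=O(N^2)$, $\widehat R_4=O(N^2)$, $L_f=O(N)$, $\widehat V=O(N+N\varepsilon_\star)$, and $\Delta=\Theta(1/Q)$ because $\kappa^{1/Q}-1=\Theta(\log\kappa/Q)$.

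Substituting these into $\mathcal W_1$ term by term:
\begin{itemize}
\item the first term is $O(1/Q)$;
\item the second term is $O(N^2/Q^2)$, dominated by $(N/2+\tau_{\max}R_1/2)^2$;
\item the third term is $O(N/Q + N\varepsilon_\star)$;
\item the fourth term is $O(\sqrt{N/(QM)})$.
\end{itemize}
The TV bound is $(\varepsilon_\star+\sqrt{R_1\mathcal W_1})/\sqrt2$, so it suffices to have $\varepsilon_\star\le\eta$ and $N\mathcal W_1\lesssim\eta^2$, which means $\mathcal W_1\lesssim\eta^2/N$.

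This yields the parameter choices. The third term forces $\varepsilon_\star\lesssim\eta^2/N^2$ and $Q\gtrsim N^2/\eta^2$. The second term needs only $Q\gtrsim N^{3/2}/\eta$, which is weaker. The fourth term requires $N/(QM)\lesssim\eta^4/N^2$, i.e.\ $QM\gtrsim N^3/\eta^4$. This product is what I expect to drive the final count. I will take $Q=\Theta(N^2/\eta^2)$ and $M=\Theta(N/\eta^2)$, so that $QM=\Theta(N^3/\eta^4)$.

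Next I bound $K(t)$ uniformly over $t\le\tau_{\max}$. With $B=\Theta(1)$, the prefactor is $Bn_1^2n_2/\lambda^2=\Theta(n_1^{2-2a}n_2)$. The exponential factor is $\exp(2+(Bt+2)\lambda^2/(8Bn_1))$, and $a\le1/2$ gives $\lambda^2/n_1=O(1)$, so this factor is $O(1)$. This is why the hypothesis $a\le 1/2$ is needed.

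The main obstacle is the bracket, because $u_0(t)$ could be exponentially large. We have $A(t)=O(n_1n_2+N)=O(n_1n_2)$, so $\log u_0=O(\sqrt{n_1\cdot n_1 n_2}/\lambda + n_1/\lambda^2)$. This is polynomial in the dimensions, so $\log\log u_0$ is only logarithmic. The term $\log(\min\{u_0,8\}/\varepsilon_\star^2)$ is $O(\log(N/\eta))$. Hence $K=\widetilde O(n_1^{2-2a}n_2)$, with the double-exponential structure contributing only log factors. The constant $C_\ell$ comes from the isoperimetric-profile bound; I will take it as an absolute constant from the earlier RWM analysis.

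Multiplying gives $QMK=\widetilde O(N^3n_1^{2-2a}n_2/\eta^4)$. The final sampling step in Algorithm \ref{alg:post-samp} costs one chain of length $K$ per sample, which is lower order. The requirement that $N$ be large is used only to absorb the lower-order terms in $R_1$, $\widehat V$ and the constants.
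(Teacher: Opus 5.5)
Your proposal is correct and follows the same route as the paper's proof. You reduce Table~\ref{tab:values} to orders in $N$ and bound $K(t)=\widetilde{O}(n_1^{2-2a}n_2)$, using $a\le 1/2$ to keep the exponential factor $O(1)$ and noting that $\log\log u_0$ and $\log(1/\varepsilon_\star)$ contribute only logarithms. You then let the $M$-term dictate $QM=\Theta(N^3/\eta^4)$.

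The paper differs in two places that do not affect the rate. First, it puts the whole product into $Q$, taking $M=1$ and $Q=\Theta(N^3/\eta^4)$, where you split it between $Q$ and $M$. Second, it sets $\varepsilon_\star=\Theta(\eta)$. Your $\varepsilon_\star=O(\eta^2/N^2)$ is the value the $\sqrt{\widetilde R_2}\,\varepsilon_\star$ part of the third term actually requires, and since $K$ depends on $\varepsilon_\star$ only logarithmically, the stated rate is unchanged.
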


\begin{corollary}\label{cor:W1}
Under the same setting as Corollary \ref{cor:TV},
we can achieve $n_1n_2\eta$ W1 distance in
\[QMK = \widetilde{O}\!\left( \frac{n_1^{1-2a}}{\eta^4} \right)\]
RWM steps.
\end{corollary}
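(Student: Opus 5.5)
The plan is to obtain Corollary~\ref{cor:W1} from Theorem~\ref{thm:W1guar} in the same way one obtains Corollary~\ref{cor:TV} from Theorem~\ref{thm:TVguar}. Write $\eta' := n_1 n_2 \eta$ for the target $W_1$ accuracy. Since $B = \Theta(1)$ and $R_1 = \|\bm{Y\circ\Omega}\|_{\mathrm F}^2 + NB/2 = \Theta(N)$ under the standardization, it suffices to choose $(Q, M, \varepsilon_\star)$ so that two conditions hold:
\[
\varepsilon_\star \lesssim \eta', \qquad \mathcal W_1 \lesssim \frac{\eta'^2}{R_1} = \Theta\!\left(\frac{n_1^2 n_2^2 \eta^2}{N}\right) =: T .
\]
Once these hold, we bound $K$ and multiply the three factors together.

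\textbf{Step 1: order of each term of $\mathcal W_1$.} With $\kappa, \tau_{\min}, \tau_{\max} = \Theta(1)$ we have $\kappa^{1/Q} - 1 = \Theta(1/Q)$ and $\Delta = \Theta(1/Q)$. We also have $L_f = \Theta(N)$, $R_2 = \Theta(N^2)$, $\widetilde R_2 = \Theta(N^2)$, and $\widehat V = O(N + N\varepsilon_\star)$. Up to constants, the four terms of $\mathcal W_1$ are therefore:
\begin{itemize}
\item $1/Q$ from the first term;
\item $N^2/Q^2$ from the second term;
\item $N/Q + N\varepsilon_\star$ from the third term;
\item $\sqrt{N/(QM)}$ from the fourth term.
\end{itemize}

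\textbf{Step 2: choice of parameters.} We make each term at most $T$.
\begin{itemize}
\item Take $\varepsilon_\star \asymp T/N$. Since $N \le n_1 n_2$, this also gives $\varepsilon_\star \lesssim n_1 n_2 \eta^2 \lesssim \eta'$ for $\eta \le 1$.
\item Take $Q \gtrsim \max\{N/T,\ N/\sqrt{T}\}$. This handles the $1/Q$, $N^2/Q^2$ and $N/Q$ terms.
\item Take $QM \gtrsim N/T^2 = N^3/(n_1^4 n_2^4 \eta^4)$. This handles the last term.
\end{itemize}
Since $N \le n_1 n_2$, each of these requirements is polynomial in $1/\eta$. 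We choose $M = \lceil N/(T^2 Q) \rceil$, so the product $QM$ has order $N/T^2$ up to the $Q$ requirement.

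\textbf{Step 3: bound on $K(t)$.} Use the value of $K(t)$ in Table~\ref{tab:values}.
\begin{itemize}
\item The exponential factor is $\exp\bigl(2 + (Bt+2)\lambda^2/(8Bn_1)\bigr)$. Since $\lambda^2/n_1 = \Theta(n_1^{2a-1}) = O(1)$ for $a \le 1/2$, this factor is $O(1)$.
\item In the bracket, $u_0(t)$ is $\exp$ of a polynomial in $n_1, n_2, N$, so $\log\log u_0$ is polylogarithmic. The term $\log(1/\varepsilon_\star^2)$ is logarithmic.
\end{itemize}
Hence $K = \widetilde O(n_1^2 n_2/\lambda^2) = \widetilde O(n_1^{2-2a} n_2)$.

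\textbf{Step 4: total cost.} Multiplying,
\[
QMK = \widetilde O\!\left(\frac{N^3}{n_1^4 n_2^4 \eta^4}\, n_1^{2-2a} n_2\right) = \widetilde O\!\left(\frac{n_1^{1-2a}}{\eta^4}\cdot\frac{N^3}{n_1^3 n_2^3}\right) = \widetilde O\!\left(\frac{n_1^{1-2a}}{\eta^4}\right),
\]
where the last step again uses $N \le n_1 n_2$.

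\textbf{Main obstacle.} The step I expect to need the most care is Step~2 together with the final multiplication. One must check that the $QM$ requirement dominates the standalone $Q$ requirements, or that $M \ge 1$ absorbs the excess, so that no hidden factor of $N$ survives. One must also check that the "$N$ large" hypothesis of Theorem~\ref{thm:W1guar} is compatible with these choices. If the $Q$-only requirement $N/\sqrt{T}$ turned out to dominate in some regime, I would fall back on the observation that it contributes only $Q$ (not $QM$) times $K$, and bound it separately by the same $N \le n_1 n_2$ comparison.
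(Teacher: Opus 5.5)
Your argument is the paper's own. The paper substitutes $\eta\mapsto n_1n_2\eta$ into Corollary~\ref{cor:TV}, which sets $M=1$ and takes $Q$ of order $N^3/\eta^4$, and then uses $N\le n_1n_2$; that is your Step~4. But the check you flag as the main obstacle genuinely fails, and the paper's proof breaks at the same point.

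Your own Step~2 shows the failure. Since $T=(n_1n_2\eta)^2/N\ge n_1n_2\eta^2$, we have $T>1$ as soon as $n_1n_2>\eta^{-2}$, which is the natural regime of fixed $\eta$ and growing matrices. There the standalone requirement $Q\ge cN/\sqrt{T}$ exceeds $N/T^2$, so $QM\asymp N/T^2$ is false. Worse, $N/T^2=N^3/(n_1^4n_2^4\eta^4)\le 1/(n_1n_2\eta^4)$ can be far below $1$, whereas $Q,M\ge 1$.

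The bracket in $K(t)$ (Table~\ref{tab:values}) is bounded below by a positive constant. So every $K$ meeting the table's requirement is at least of order $n_1^{2-2a}n_2$. Then $QMK\ge K$ already exceeds $n_1^{1-2a}/\eta^4$ by a factor of order $n_1n_2\eta^4$, which is unbounded for fixed $\eta$. The claimed count is therefore unattainable by the sampler with the prescribed $K(t)$ in that regime, whatever $Q$ and $M$ are.

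Your fallback does not repair this. The $Q$-only requirement alone costs $QK$ of order $(N/\sqrt{T})\,n_1^{2-2a}n_2=n_1^{1-2a}N^{3/2}/\eta$, which exceeds the target by the factor $N^{3/2}\eta^3$. Moreover, $N\le n_1n_2$ is an upper bound on $N$, so invoking it only makes this term larger.

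Done correctly, with $Q=\lceil c\max\{N/T,N/\sqrt{T}\}\rceil$ and $M=\lceil N/(T^2Q)\rceil$, the route through Theorem~\ref{thm:W1guar} gives
$QMK=\widetilde{O}\bigl(n_1^{2-2a}n_2+n_1^{1-2a}N^{3/2}/\eta+n_1^{1-2a}/\eta^4\bigr)$.
This reduces to the stated bound only when $\eta$ is at most of order $\min\{N^{-1/2},(n_1n_2)^{-1/4}\}$, up to logarithms. That range contains the case $T\le 1$, i.e.\ $\eta\le\sqrt{N}/(n_1n_2)$, in which your Steps~2--4 go through. It also contains the case $n_1n_2\eta<1$ that the paper's substitution tacitly requires, because the proof of Corollary~\ref{cor:TV} assumes a target accuracy below $1$ in order to declare the $M$-term dominant with $M=1$. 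So, by either argument, the corollary is justified only under such a restriction on $\eta$, or with the three-term bound in place of the stated one.

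A smaller point: your derivation of $\varepsilon_\star=O(n_1n_2\eta^2)$ runs the wrong way. The inequality $N\le n_1n_2$ gives $T/N=(n_1n_2\eta/N)^2\ge\eta^2$, a lower bound. When $T>N$, your choice even makes $\varepsilon_\star>1$, which inflates $\widehat V$. Taking $\varepsilon_\star=\min\{T/N,\,n_1n_2\eta,\,1\}$ fixes this at only logarithmic cost in $K$; for $T\le 1$ your original choice is already fine.
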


We note that the $\eta^4$ in the denominator comes entirely from $Q$.
It remains an open question the extent to which $Q$ can be tightened.
The desired error tolerance for Corollary \ref{cor:W1} comes from the fact that the W1 distance is the infimum of an expected entrywise 1-norm difference, so it makes sense to consider the average error over each entry, yielding a total error tolerance of $n_1n_2\eta$.
Standardization of observations is a common practice in empirical studies,
but to match the scale given by the prior, we scale the entries as in \ref{cor:TV}.

\section{Numerical illustration}
\label{sec:numerical}

The contribution of this paper is a feasibility result: a rigorous,
high-accuracy sampler for this model exists in polynomial time, at a rate
whose constants we do not optimize (Theorems~\ref{thm:TVguar}
and~\ref{thm:W1guar}). Accordingly, we do not attempt to validate that rate
numerically here; doing so would require running Algorithm~\ref{alg:rwm}
for the step counts $K(\tau)$ of Theorem~\ref{thm:TVguar}, which are
very possibly loose and intractable at any scale small enough to plot. The
illustrations below instead target two narrower questions: (i) is the
degeneracy argument motivating our use of thermodynamic integration
visible on a toy problem, and (ii) is the
resulting posterior, sampled at practical rather than worst-case step
budgets, consistent with the heuristic Gibbs sampler already used in
practice. Code and full diagnostics (acceptance rates, split-half Monte
Carlo error) are in the supplementary material.

\paragraph{Naïve importance sampling degenerates; thermodynamic
integration does not.} Estimating the marginal likelihood at a candidate
precision $\tau_\star$ by importance-reweighting samples drawn from the
untilted prior $\mu$ requires importance weights
$w_i \propto \exp(-\tfrac{\tau_\star}{2} R(\boldsymbol{L}_i))$ that degrade
as the tilt moves away from $\mu$. Figure~\ref{fig:ti-vs-naive} confirms
this directly on matrices of size $n \times n$ for $n = 6,\dots,16$, with
$N$ (the number of observed entries) swept from roughly 2 to 200 by
scaling $n$ and the observation probability jointly. Panel~(A) reports the
effective-sample-size fraction $\mathrm{ESS}/M$ of the naive estimator,
which collapses by roughly two orders of magnitude over this range. Panel
(B) reports the relative Monte Carlo standard error of our
thermodynamic-integration estimator (the score identity of
Appendix B.1, integrated along a path of inner chains that sample
only from the correctly tilted measure $\mu_\tau$ at each grid point),
using a comparable total sampling budget; this error stays bounded, and
in fact improves, over the same range of $N$. The two panels use different
metrics in different units and should not be read off a shared axis.

\begin{figure}[t]
    \centering
    \includegraphics[width=\linewidth]{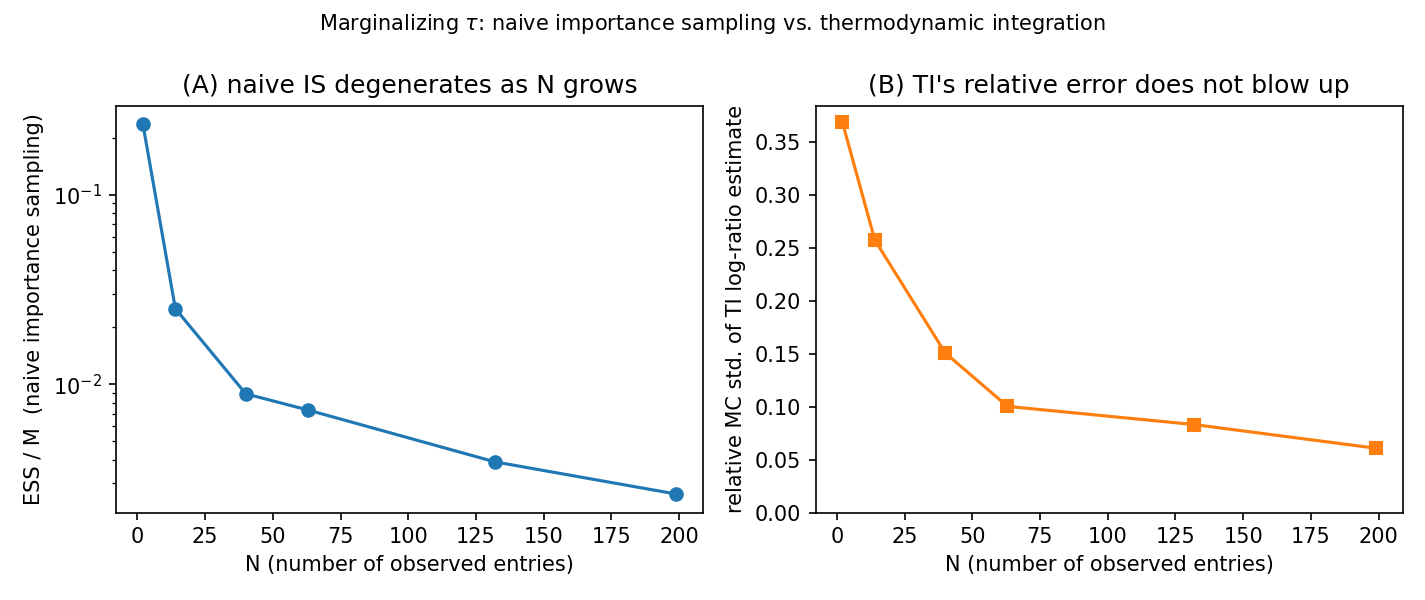}
    \caption{Marginalizing $\tau$ on toy $n\times n$ matrices, $n=6,\dots,16$.
    (A) The effective-sample-size fraction of naive importance sampling
    from the untilted prior collapses as $N$ grows. (B) The relative
    Monte Carlo error of the thermodynamic-integration estimator over the
    same range of $N$ stays bounded, at a comparable total sampling
    budget. Different units in each panel; do not compare on one axis.}
    \label{fig:ti-vs-naive}
\end{figure}

\paragraph{Agreement with the heuristic Gibbs sampler, at practical
step budgets.} We next compare a practical-scale, thermodynamic-integration-based
sampler (in the spirit of Algorithm~\ref{alg:post-samp}, but at step counts
tuned for the classical $0.2$--$0.4$ random-walk-Metropolis acceptance
regime rather than $K(\tau)$) against Gibbs sampling on $\boldsymbol{L}
\mid \tau$ and $\tau \mid \boldsymbol{L}$, the heuristic which practitioners
already use, and which generally carries no
non-asymptotic guarantee. At $n = 12$, on a rank-2 ground truth, with
$\lambda = 0.2$,
the two samplers agree quantitatively, not
just visually: the cross-method gap between their posterior means of
$\boldsymbol{L}$ is smaller than each method's own split-half Monte Carlo
noise (Figure~\ref{fig:gibbs-comparison}). This is evidence that the
target distribution our sampler is provably close to is not a
pathological artifact of the construction -- it is not evidence about the
stated convergence rate, which neither chain here is run anywhere near.

\begin{figure}[t]
    \centering
    \includegraphics[width=\linewidth]{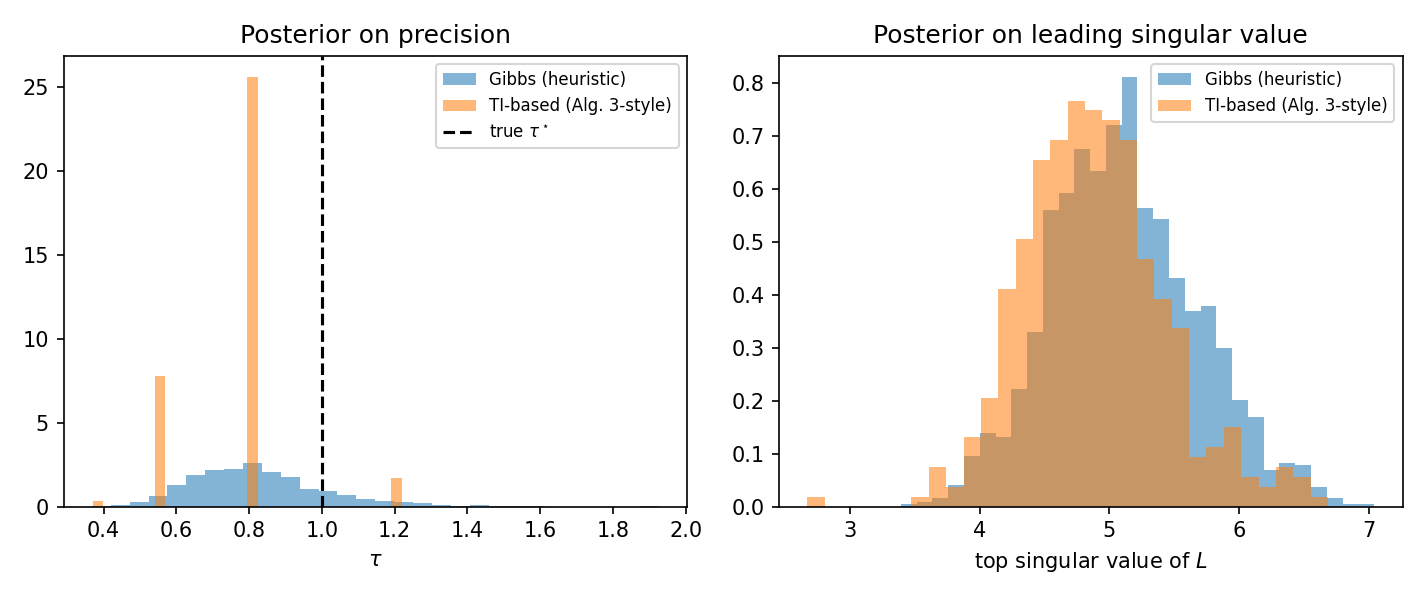}
    \caption{Posterior on the noise precision $\tau$ and on the leading
    singular value of $\boldsymbol{L}$: Gibbs sampling (heuristic, no
    guarantee) versus our thermodynamic-integration-based sampler, at
    matched practical step budgets. The cross-method gap is within each
    method's own split-half Monte Carlo noise.}
    \label{fig:gibbs-comparison}
\end{figure}

\paragraph{Implementation checks.} Independently of the above, we verified
our random-walk-Metropolis implementation against three closed-form
consequences of the model: that $\mu$'s symmetry under
$\boldsymbol{L} \to -\boldsymbol{L}$ forces $\mathbb{E}_\mu[\boldsymbol{L}]
= 0$; the Brascamp--Lieb bound $\mathrm{Var}(L_{ij}) \le B/2$ implied by
$\mu$'s $2/B$-strong log-concavity; and the bound
$\mathbb{E}_\mu[R(\boldsymbol{L})] \le R_1$ of Lemma~\ref{lem:ER}.
All three hold empirically; details are in the supplementary code.

\bibliography{refs}

@article{Abadie2010,
    author = {Alberto Abadie and Alexis Diamond and Jens Hainmueller},
    title = {Synthetic Control Methods for Comparative Case Studies: Estimating the Effect of California’s Tobacco Control Program},
    journal = {Journal of the American Statistical Association},
    volume = {105},
    number = {490},
    pages = {493--505},
    year = {2010},
    publisher = {ASA Website},
    doi = {10.1198/jasa.2009.ap08746},
    URL = {https://doi.org/10.1198/jasa.2009.ap08746}
}

@article{Abrahamsen2017,
    author = {Tavis Abrahamsen and James P. Hobert},
    title = {{Convergence analysis of block Gibbs samplers for Bayesian linear mixed models with $p>N$}},
    volume = {23},
    journal = {Bernoulli},
    number = {1},
    publisher = {Bernoulli Society for Mathematical Statistics and Probability},
    pages = {459 -- 478},
    year = {2017},
    doi = {10.3150/15-BEJ749},
    URL = {https://doi.org/10.3150/15-BEJ749}
}

@article{ChenYuxin2020,
  author = {Yuxin Chen and Jianqing Fan and Cong Ma and Yuling Yan},
  title = {Bridging Convex and Nonconvex Optimization in Robust PCA: Noise, Outliers, and Missing Data},
  journal = {Annals of statistics},
  year = {2020},
  volume = {49 5},
  pages = {2948-2971},
  doi = {10.1214/21-aos2066},
}

@book{Chewi2024,
  author = {Sinho Chewi},
  title = {Log-Concave Sampling},
  year = {2024}
}

@article{Cui2024,
  author = {Tiangang Cui and Alex Gorodetsky},
  title = {Low-Rank Bayesian Matrix Completion via Geodesic Hamiltonian Monte Carlo on Stiefel Manifolds},
  journal = {ArXiv},
  year = {2024},
  doi = {10.48550/arXiv.2410.20318},
}

@article{Hobert1998,
    title = {Geometric Ergodicity of Gibbs and Block Gibbs Samplers for a Hierarchical Random Effects Model},
    journal = {Journal of Multivariate Analysis},
    volume = {67},
    number = {2},
    pages = {414-430},
    year = {1998},
    issn = {0047-259X},
    doi = {https://doi.org/10.1006/jmva.1998.1778},
    url = {https://www.sciencedirect.com/science/article/pii/S0047259X9891778X},
    author = {James P. Hobert and Charles J. Geyer},
}

@article{Jones2004,
    author = {Galin L. Jones and James P. Hobert},
    title = {{Sufficient burn-in for Gibbs samplers for a hierarchical random effects model}},
    volume = {32},
    journal = {The Annals of Statistics},
    number = {2},
    publisher = {Institute of Mathematical Statistics},
    pages = {784 -- 817},
    year = {2004},
    doi = {10.1214/009053604000000184},
    URL = {https://doi.org/10.1214/009053604000000184}
}

@article{Keshavan2010Main,
  author = {Raghunandan H. Keshavan and Andrea Montanari and Sewoong Oh},
  title = {Matrix Completion from Noisy Entries},
  journal = {Journal of Machine Learning Research},
  year = {2010},
  pages = {2057-2078},
}

@article{Keshavan2010Later,
  author = {Raghunandan H. Keshavan and A. Montanari},
  title = {Regularization for matrix completion},
  journal = {2010 IEEE International Symposium on Information Theory},
  year = {2010},
  pages = {1503-1507},
  doi = {10.1109/ISIT.2010.5513563},
}

@article{Negabhan2010,
  author = {S. Negahban and M. Wainwright},
  title = {Restricted strong convexity and weighted matrix completion: Optimal bounds with noise},
  journal = {J. Mach. Learn. Res.},
  year = {2010},
  volume = {13},
  pages = {1665-1697},
  doi = {10.5555/2503308.2343697},
}

@article{Roman2012,
    author = {Jorge Carlos Rom{\'a}n and James P. Hobert},
    title = {{Convergence analysis of the Gibbs sampler for Bayesian general linear mixed models with improper priors}},
    volume = {40},
    journal = {The Annals of Statistics},
    number = {6},
    publisher = {Institute of Mathematical Statistics},
    pages = {2823 -- 2849},
    year = {2012},
    doi = {10.1214/12-AOS1052},
    URL = {https://doi.org/10.1214/12-AOS1052}
}

@article{Rosenthal1995,
  author = {Jeffrey S. Rosenthal},
  title = {Rates of convergence for Gibbs sampling for variance component models},
  journal = {The Annals of Statistics},
  year = {1995},
  month = {6},
  volume = {23},
  number = {3},
  pages = {740-761},
  doi = {10.1214/aos/1176324619},
}

@article{Tibshirani1996,
    url = {http://www.jstor.org/stable/2346178},
    author = {Robert Tibshirani},
    journal = {Journal of the Royal Statistical Society. Series B (Methodological)},
    number = {1},
    pages = {267--288},
    publisher = {[Royal Statistical Society, Oxford University Press]},
    title = {Regression Shrinkage and Selection via the Lasso},
    volume = {58},
    year = {1996}
}

@article{koren2009matrix,
  title={Matrix factorization techniques for recommender systems},
  author={Koren, Yehuda and Bell, Robert and Volinsky, Chris},
  journal={Computer},
  volume={42},
  number={8},
  pages={30--37},
  year={2009},
  publisher={IEEE}
}

@article{athey2021matrix,
  title={Matrix completion methods for causal panel data models},
  author={Athey, Susan and Bayati, Mohsen and Doudchenko, Nikolay and Imbens, Guido and Khosravi, Khashayar},
  journal={Journal of the American Statistical Association},
  volume={116},
  number={536},
  pages={1716--1730},
  year={2021},
  publisher={Taylor \& Francis}
}

@article{agarwal21causal,
  title={Causal matrix completion},
  author={Agarwal, Anish and Dahleh, Munther and Shah, Devavrat and Shen, Dennis},
  journal={arXiv preprint arXiv:2109.15154},
  year={2021}
}

@article{bai21,
author = {Bai, Jushan and Ng, Serena},
title = {Matrix Completion, Counterfactuals, and Factor Analysis of Missing Data},
year = {2021},
issue_date = {20 September 2021},
publisher = {Taylor \& Francis},
address = {London, UK},
volume = {116},
number = {536},
pages = {1746-1763},
url = {https://www.tandfonline.com/doi/full/10.1080/01621459.2021.1967163},
doi = {10.1080/01621459.2021.1967163},
journal = {Journal of the American Statistical Association},
numpages = {18}
}

@article{candestao10,
author = {Candès, Emmanuel and Tao, Terence},
title = {The Power of Convex Relaxation: Near-Optimal Matrix Completion},
year = {2010},
issue_date = {5 May 2010},
publisher = {IEEE},
volume = {56},
issue = {5},
pages = {2053-2080},
url = {https://ieeexplore.ieee.org/document/5452187},
doi = {10.1109/TIT.2010.2044061},
journal = {IEEE Transactions on Information Theory},
month = {5},
numpages = {28}
}

@article{durmus22,
author = {Durmus, Alain and Moulines, \'{E}ric and Pereyra, Marcelo},
title = {A Proximal Markov Chain Monte Carlo Method for Bayesian Inference in Imaging Inverse Problems: When Langevin Meets Moreau},
journal = {SIAM Review},
volume = {64},
number = {4},
pages = {991-1028},
year = {2022},
doi = {10.1137/22M1522917},
URL = {https://doi.org/10.1137/22M1522917},
}

@article{durmus17,
author = {Alain Durmus and {\'E}ric Moulines},
title = {{Nonasymptotic convergence analysis for the unadjusted Langevin algorithm}},
volume = {27},
journal = {The Annals of Applied Probability},
number = {3},
publisher = {Institute of Mathematical Statistics},
pages = {1551 -- 1587},
year = {2017},
doi = {10.1214/16-AAP1238},
URL = {https://doi.org/10.1214/16-AAP1238}
}

@inproceedings{Richtarik19,
 author = {Salim, Adil and Kovalev, Dmitry and Richtarik, Peter},
 booktitle = {Advances in Neural Information Processing Systems},
 editor = {H. Wallach and H. Larochelle and A. Beygelzimer and F. d\textquotesingle Alch\'{e}-Buc and E. Fox and R. Garnett},
 pages = {},
 publisher = {Curran Associates, Inc.},
 title = {Stochastic Proximal Langevin Algorithm: Potential Splitting and Nonasymptotic Rates},
 url = {https://proceedings.neurips.cc/paper_files/paper/2019/file/6a8018b3a00b69c008601b8becae392b-Paper.pdf},
 volume = {32},
 year = {2019}
}

@inproceedings{Richtarik20,
 author = {Salim, Adil and Richtarik, Peter},
 booktitle = {Advances in Neural Information Processing Systems},
 editor = {H. Larochelle and M. Ranzato and R. Hadsell and M.F. Balcan and H. Lin},
 pages = {3786--3796},
 publisher = {Curran Associates, Inc.},
 title = {Primal Dual Interpretation of the Proximal Stochastic Gradient Langevin Algorithm},
 url = {https://proceedings.neurips.cc/paper_files/paper/2020/file/2779fda014fbadb761f67dd708c1325e-Paper.pdf},
 volume = {33},
 year = {2020}
}

@InProceedings{Lee20,
  title   = {Logsmooth Gradient Concentration and Tighter Runtimes for Metropolized Hamiltonian Monte Carlo},
  author  =       {Lee, Yin Tat and Shen, Ruoqi and Tian, Kevin},
  booktitle = 	 {Proceedings of Thirty Third Conference on Learning Theory},
  pages = 	 {2565--2597},
  year = 	 {2020},
  editor = 	 {Abernethy, Jacob and Agarwal, Shivani},
  volume = 	 {125},
  series = 	 {Proceedings of Machine Learning Research},
  month = 	 {09--12 Jul},
  publisher =    {PMLR},
  url = 	 {https://proceedings.mlr.press/v125/lee20b.html},
}

@article{Dwivedi20,
author = {Chen, Yuansi and Dwivedi, Raaz and Wainwright, Martin J. and Yu, Bin},
title = {Fast mixing of metropolized Hamiltonian Monte Carlo: benefits of multi-step gradients},
year = {2020},
issue_date = {January 2020},
publisher = {JMLR.org},
volume = {21},
number = {1},
issn = {1532-4435},
journal = {J. Mach. Learn. Res.},
month = jan,
articleno = {92},
numpages = {71}
}

@article{Dwivedi19,
  author  = {Raaz Dwivedi and Yuansi Chen and Martin J. Wainwright and Bin Yu},
  title   = {Log-concave sampling: Metropolis-Hastings algorithms are fast},
  journal = {Journal of Machine Learning Research},
  year    = {2019},
  volume  = {20},
  number  = {183},
  pages   = {1--42},
  url     = {http://jmlr.org/papers/v20/19-306.html}
}

@article{Andrieu24,
  author  = {Andrieu, Christophe and Lee, Anthony and Power, Sam and Wang, Andi Q.},
  title   = {Explicit convergence bounds for {M}etropolis {M}arkov chains: isoperimetry, spectral gaps and profiles},
  journal = {Annals of Applied Probability},
  volume  = {34},
  number  = {4},
  pages   = {4022--4071},
  year    = {2024},
  doi     = {10.1214/24-AAP2058}
}

@article{Mou22,
  author  = {Wenlong Mou and Nicolas Flammarion and Martin J. Wainwright and Peter L. Bartlett},
  title   = {An Efficient Sampling Algorithm for Non-smooth Composite Potentials},
  journal = {Journal of Machine Learning Research},
  year    = {2022},
  volume  = {23},
  number  = {233},
  pages   = {1--50},
  url     = {http://jmlr.org/papers/v23/20-527.html}
}

\clearpage
\appendix

\section*{Appendix}

\section{Chi-squared mixing bounds for high-accuracy samplers}
\subsection{Setup}
We want to sample from the following measure on $\mathbb R^d$:
\[\pi(\mathrm dx) \propto \exp(-V(x))\,\mathrm{d}x, \qquad
V(x) = F(x) + G(x),\]
where $F$ is $m$-strongly convex and $L_F$-smooth
and $G$ is convex but nonsmooth and not everywhere differentiable,
but it is $L_G$-Lipschitz, which implies that its minimal subgradient satisfies $\|\nabla^0 G\| \leq L_G$.
This problem is more difficult than the classic smooth-and-strongly-convex setting.
We will sample using a Metropolis-adjusted Markov chain $P$ targeting $\pi$, and we want a bound of the form
\[\chi^2(\pi_0 P^K \| \pi) \leq \varepsilon_{\mathrm{mix}}, \quad
K = \mathrm{poly}(d,\log(1/\varepsilon_{\mathrm{mix}})).\]
We will lean heavily on \cite{Andrieu24} and begin by giving an overview of the terminology used in that paper.
\begin{itemize}
    \item For a $\pi$-measurable set $A$, the \textbf{conductance}
    \[\Phi(A) = \frac{1}{\pi(A)} \int_A P(\bm{L}, A^c) \pi(\mathrm{d}\bm{L})\]
    measures how much probability mass flows between $A$ and $A^c$ per step.
    \item For a set $A$ with $\pi(A)=p$, its \textbf{boundary measure}
    \[\pi^+(A) := \liminf_{r \to 0}
    \left\{ \frac{\pi(B_r(A)) - \pi(A)}{r} \right\}\]
    measures how much $\pi$-mass sits at the edge of $A$.
    \item The \textbf{isoperimetric profile}
    $I_\pi(p) := \inf\{\pi^+(A): \pi(A)=p\}$
    measures the smallest perimeter (boundary measure) of a set of probability $p$.
    Intuitively, if $I_\pi$ is large, the boundaries are large, so each set of a given sides has many places to exit and the Markov chain will not be stuck one one side of any boundary.
    \item An \textbf{isoperimetric minorant} $\tilde I_\pi \leq I_\pi$ is any lower bound on $I_\pi$.
    \item A minorant is \textbf{regular} if it is symmetric about $p = 1/2$, continuous, and increasing on $(0,1/2]$.
    \item The Markov kernel $P$ is $(\delta,\varepsilon)$-\textbf{close-coupling} if any two starting points within Euclidean
    distance $\delta$ of each other have next-step distributions within $1-\varepsilon$ TV distance of each other.
    \item The \textbf{spectral gap}
    \[\gamma_P := 1 - \|P\|_{L_0^2}, \quad
    \|P\|_{L_0^2} = \sup_{\|f\|_2=1,\int f\,\mathrm{d}\pi=0} \|Pf\|_2,\]
    controls geometric decay of chi-squared divergence:
    \[\chi^2(\pi_0 P^k\|\pi) \leq (1-\gamma_P)^{2k} \chi^2(\pi_0\|\pi).\]
\end{itemize}
\subsection{The general recipe}
The main result of \cite{Andrieu24} (Theorem 18) states that if
\begin{itemize}
    \item $\pi$ has regular concave isoperimetric minorant $\tilde I_\pi$,
    \item $P$ is $(\delta,\varepsilon)$-close-coupling and $\pi$-reversible, and
    \item $u_0:=\chi^2(\pi_0\|\pi)$ is finite,
\end{itemize}
we have $\chi^2(\pi_0 P^k \| \pi) \leq \varepsilon_{\mathrm{mix}}$ for $\varepsilon_{\mathrm{mix}} \in (0,8)$ after
\begin{align}
K &\geq 2 + \frac{64}{\varepsilon^2}
\max\left\{ \log \left( \frac{u_0 v_*}{4}\right), 0 \right\}
\tag*{(term 1: burn-in)}\\
&\qquad +\, \frac{256}{\varepsilon^2 \delta^2}
\int_{\max\{\min\{2/u_0,1/4\},v_*/2\}}^{1/4}
\frac{\xi}{\tilde I_\pi(\xi)^2}\mathrm d\xi
\tag*{(term 2: bulk mixing)}\\
&\qquad +\, \frac{16}{\varepsilon^2}
\max\left\{1,\frac{1}{4\delta^2 \tilde I_\pi(1/4)^2}\right\}
\max\left\{ \log \left( \frac{\min\{u_0,8\}}{\varepsilon_{\mathrm{mix}}} \right), 0 \right\}
\tag*{(term 3: fine convergence)}
\end{align}
where $v_*$ is a threshold scale solving
\[v_* := \min\left\{ \frac{1}{2}, \max\left\{0, \sup\left\{
v > 0: \tilde I_\pi\left(\frac{v}{2}\right) \geq \frac{v}{\delta}
\right\} \right\} \right\}.\]
This is exactly the scaling we need; all that remains is to find $\delta, \varepsilon, \tilde I_\pi$ for our specific setting.
To do this, we use another result from the paper: for any Metropolis kernel with proposal $Q$, we have
\[\|P(x,\cdot)-P(y,\cdot)\|_{\mathrm{TV}}
\leq \|Q(x,\cdot)-Q(y,\cdot)\|_{\mathrm{TV}} + 1-\alpha_0, \quad
\alpha_0 := \inf_x \alpha(x),\]
where $\alpha(x)$ is the average acceptance probability at $x$.
This splits the close-coupling problem into two independent pieces:
\begin{enumerate}
    \item How stable is the proposal as we move the current point?
    \item How low can the acceptance probability get?
\end{enumerate}
We will solve each separately and add the bounds.
\subsection{The target-side ingredient}
For any $m$-strongly convex potential, Caffarelli's contraction theorem gives a $1$-Lipschitz map pushing $\mathcal{N}(0,m^{-1}\bm{I})$ onto $\pi$, so $\pi$ inherits the Gaussian isoperimetric profile, scaled by $\sqrt{m}$ (\cite{Andrieu24}, Lemma 27):
\[I_\pi(\xi) \geq \tilde I_\pi(\xi) := C_\ell \,\xi\sqrt{m\log(1/\xi)},
\quad \xi \in (0,1/2), \quad C_\ell = 0.958357.\]
This is dimension-free!
\subsection{The algorithm-side ingredients}
Andrieu et al.'s acceptance-probability lemma (their Lemma 39) requires a one-sided growth bound
\[V(\bm{x}+\bm{h}) - V(\bm{x}) - \langle \nabla V(\bm{x}),\bm{h} \rangle
\leq \psi(\|\bm{h}\|),\]
for some nondecreasing $\psi$. We therefore have
\begin{align*}
&V(\bm{x} + \bm{h}) - V(\bm{x})
- \langle \nabla^0 V(\bm{x}), \bm{h} \rangle &&\\
&\quad = F(\bm{x} + \bm{h}) - F(\bm{x})
- \langle \nabla F(\bm{x}), \bm{h} \rangle
+ G(\bm{x} + \bm{h}) - G(\bm{x})
- \langle \nabla^0 G(\bm{x}), \bm{h} \rangle &&\\
&\quad \leq \frac{L_F}{2}\|\bm h\|^2 + 2L_G\|\bm h\| &&\\
&\quad =: \psi(\|\bm h\|).
\end{align*}
For random-walk Metropolis, the proposal is
$\mathcal{N}(\bm{x},\sigma^2 \bm{I})$.
By the Gaussian KL identity and Pinsker's inequality,
\[\|Q_x-Q_y\|_{\mathrm{TV}}
\leq \sqrt{\frac{1}{2} \mathrm{KL}(Q_x \| Q_y)}
= \frac{\|x-y\|}{2\sigma}.\]
\cite{Andrieu24}'s symmetrization argument (Lemma 39)
gives, for any growth bound $\psi$,
\[\alpha_0 \geq \frac{1}{2} \exp(-\mathbb{E}_{\bm{z}\sim \mathcal{N}(0,\bm{I})} [\psi(\sigma\|\bm z\|)]).\]
Plugging in our $\psi$ and choosing
$\sigma = \varsigma / L_G \sqrt{d}$ for a tuning constant $\varsigma>0$ makes both terms of $\mathbb{E}[\psi(\sigma\|\bm z\|)]$ bounded, dimension-free constants:
\[\alpha_0 \geq \frac{1}{2} \exp\left(-2\varsigma
- \frac{L_F\varsigma^2}{2L_G^2} \right).\]
\subsection{Assembling the mixing bound}
We will evaluate the integral from term 2 explicitly.
We find that the equation $\tilde I_\pi(v/2) = v/\delta$ is solved by
\[v_* = 2\exp\left(-\frac{4}{C_\ell^2 m \delta^2}\right),\]
which will be squared-exponentially small if $\delta$ is small.
If $u_0 < 4/v_*$, which will always be the case in our applications because $4/v_*$ is exponentially large, the first term is zero.
In that case, the integral in the second term will come out to
\[\int_{2/u_0}^{1/4} \frac{\xi}{\tilde I_\pi(\xi)^2}\,\mathrm d\xi
= \frac{1}{C_\ell^2m} \int_{2/u_0}^{1/4}
\frac{1}{\xi\log(1/\xi)} \,\mathrm d\xi
= \frac{\log\log(u_0/2) - \log\log(4)}{C_\ell^2m}.\]
Our bound is now
\begin{align*}
K &\geq 2 + \frac{256}{\varepsilon^2 \delta^2}
\cdot \frac{\log\log(u_0/2) - \log\log(4)}{C_\ell^2 m}
+ \frac{64}{\varepsilon^2}
\cdot \frac{1}{C_\ell^2 m \delta^2 \log(4)}
\log \left( \frac{\min\{u_0,8\}}{\varepsilon_{\mathrm{mix}}} \right) &&\\
&= 2 + \frac{64}{C_\ell^2 m \varepsilon^2 \delta^2}
\left[ 4(\log\log(u_0/2) - \log\log(4))
+ \frac{1}{\log(4)}
\log \left( \frac{\min\{u_0,8\}}{\varepsilon_{\mathrm{mix}}} \right) \right].
\end{align*}
Now it remains to optimize our $\varepsilon$ and $\delta$.
Combining the earlier results, we see that
\[\|P(x,\cdot)-P(y,\cdot)\|_{\mathrm{TV}}
\leq \|Q(x,\cdot)-Q(y,\cdot)\|_{\mathrm{TV}} + 1 - \alpha_0
\leq \frac{\|x-y\|}{2\sigma} + 1 - \alpha_0.\]
To turn this into a $(\delta,\varepsilon)$-close-coupling,
we need a bound $\delta$ on the Euclidean distance and a bound $1-\varepsilon$ on the TV distance.
We choose $\delta = \alpha_0 \sigma$ so that the first term is at most half of what $\alpha_0$ allows.
The resulting right side of the inequality above tells us to set $\varepsilon = \alpha_0/2$.
Not knowing $\alpha_0$, we substitute the lower bound, and we get $\delta$ and $\varepsilon$ as functions solely of $\varsigma$:
\[\delta = \frac{\varsigma}{2L_G\sqrt{d}}
\exp\left(-2\varsigma - \frac{L_F\varsigma^2}{2L_G^2} \right), \quad
\varepsilon = \frac{1}{4}
\exp\left(-2\varsigma - \frac{L_F\varsigma^2}{2L_G^2} \right).\]
Looking at our formula for the necessary $K$, we see that to minimize it, we must maximize $\varepsilon\delta$.
Neglecting the quadratic term in the exponent, which vanishes quickly,
we can see from first-order optimality that an approximate optimum occurs at $\varsigma = 1/4$, which yields
\[\varepsilon\delta = \frac{1}{32L_G\sqrt{d}}
\exp\left(-1 - \frac{L_F}{16L_G^2} \right).\]
Our final bound is
\[K \geq 2 + 65536 \cdot \frac{L_G^2 d}{C_\ell^2 m}
\exp\left(2 + \frac{L_F}{8L_G^2} \right)
\left[ 4(\log\log(u_0/2) - \log\log(4))
+ \frac{1}{\log(4)}
\log \left( \frac{\min\{u_0,8\}}{\varepsilon_{\mathrm{mix}}} \right) \right].\]
The $\exp(L_F/8L_G^2)$ factor is the cost of using a growth bound derived from \cite{Andrieu24} rather than a proximal method tailored to the nuclear norm. However, this is still necessary to prevent exponential dependence on dimension.

\subsection{A bound on the initial chi-squared divergence}
The previous sections' results hinge on $u_0$, which is often difficult to compute or bound. However, in the applications we are considering, an obvious initialization measure exists with tractable bounds on $u_0$.

\begin{lemma}\label{lem:X2}
Let
\[
\pi(\mathrm d\bm L)
\propto
\exp\!\left(
-\sum_{i,j}\phi_{ij}(L_{ij})
-\frac1\lambda\|\bm L\|_*
\right)\mathrm d\bm L,
\]
where each $\phi_{ij}:\mathbb R\to\mathbb R$ is $m$-strongly convex.
Define the product measure
\[\pi^0(\mathrm d\bm L) \propto \exp\!\left( -\sum_{i,j}\phi_{ij}(L_{ij}) \right)\mathrm d\bm L.\]
Then
\[\chi^2(\pi^0\|\pi)
\le \exp\!\left(\frac{\sqrt r}{\lambda} \sqrt{\frac{d}{m}
+ \|\mathbb E_{\pi^0}[\bm L]\|_{\mathrm F}^2}
+ \frac{r}{2m\lambda^2}\right)-1,\]
where $d = n_1n_2$ and $r=\min\{n_1,n_2\}$.
\end{lemma}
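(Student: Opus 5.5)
Write $G(\bm L) := \frac1\lambda\|\bm L\|_*$, so that $\mathrm d\pi/\mathrm d\pi^0 = e^{-G}/Z$ with $Z := \mathbb E_{\pi^0}[e^{-G}]$. The chi-squared divergence then has the closed form
\[
\chi^2(\pi^0\|\pi) = \mathbb E_{\pi}\!\left[\left(\tfrac{\mathrm d\pi^0}{\mathrm d\pi}\right)^2\right]-1
= Z\,\mathbb E_{\pi^0}[e^{G}] - 1
= \mathbb E_{\pi^0}[e^{-G}]\,\mathbb E_{\pi^0}[e^{G}]-1 .
\]
The plan is to bound the two factors separately, using two facts about $\pi^0$. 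It is a product of one-dimensional $m$-strongly log-concave laws, so it is $m$-strongly log-concave on $\mathbb R^{d}$. Also, $G$ is $(\sqrt r/\lambda)$-Lipschitz in Frobenius norm, because $\|\bm L\|_*\le\sqrt r\,\|\bm L\|_{\mathrm F}$ and the nuclear norm is a norm.

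\textbf{First factor.} Jensen's inequality gives $\mathbb E_{\pi^0}[e^{-G}]\ge e^{-\mathbb E G}$, which points the wrong way for an upper bound, so I will not use it here. Instead I use the trivial bound $e^{-G}\le 1$, valid since $G\ge 0$, to get $\mathbb E_{\pi^0}[e^{-G}]\le 1$.

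\textbf{Second factor.} This is the main step. Strong log-concavity gives sub-Gaussian concentration of Lipschitz functions through the Herbst argument from the log-Sobolev inequality with constant $1/m$:
\[
\mathbb E_{\pi^0}\!\left[e^{G-\mathbb E_{\pi^0}G}\right]\le \exp\!\left(\frac{\|G\|_{\mathrm{Lip}}^2}{2m}\right)=\exp\!\left(\frac{r}{2m\lambda^2}\right).
\]
This produces exactly the term $\frac{r}{2m\lambda^2}$ in the statement. It remains to bound the mean:
\[
\mathbb E_{\pi^0}G \le \frac{\sqrt r}{\lambda}\,\mathbb E_{\pi^0}\|\bm L\|_{\mathrm F}
\le \frac{\sqrt r}{\lambda}\sqrt{\mathbb E_{\pi^0}\|\bm L\|_{\mathrm F}^2}.
\]
For the second moment, decompose
\[
\mathbb E\|\bm L\|_{\mathrm F}^2=\sum_{i,j}\mathrm{Var}(L_{ij})+\|\mathbb E_{\pi^0}\bm L\|_{\mathrm F}^2 .
\]
By the Brascamp--Lieb (or Poincaré) inequality for each $m$-strongly log-concave coordinate, $\mathrm{Var}(L_{ij})\le 1/m$, so the variance sum is at most $d/m$. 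Multiplying the two factors then yields the claimed bound.

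\textbf{Main obstacle.} The point needing care is invoking the Herbst/log-Sobolev concentration for the non-smooth Lipschitz function $G$. I would handle it either by approximating $G$ with smooth Lipschitz functions (for instance its Moreau envelope, which has the same Lipschitz constant) and passing to the limit by monotone convergence, or by citing the Gaussian-type concentration for strongly log-concave measures, which can also be obtained from the Caffarelli contraction already used in the target-side ingredient. In the Caffarelli route, $G\circ T$ is $(\sqrt r/\lambda)$-Lipschitz with respect to $\mathcal N(0,m^{-1}\bm I)$, and Gaussian concentration applies directly.

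A last check: $\mathrm d\pi^0/\mathrm d\pi$ must be square-integrable under $\pi$, i.e.\ $\mathbb E_{\pi^0}[e^{G}]<\infty$. This holds by the same sub-Gaussian bound, so the divergence is finite and every manipulation above is justified.
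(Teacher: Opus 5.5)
Your proposal is correct and follows essentially the paper's argument: you bound the normalizing-constant ratio by $1$ (using $\|\bm L\|_*\ge 0$), apply the sub-Gaussian moment-generating-function bound for $m$-strongly log-concave measures, and then control the mean using $\|\bm L\|_*\le\sqrt r\,\|\bm L\|_{\mathrm F}$, Brascamp--Lieb, and Jensen. The only difference is the order of steps: the paper first replaces $\|\bm L\|_*$ by $\sqrt r\,\|\bm L\|_{\mathrm F}$ pointwise and applies concentration to the $1$-Lipschitz $\|\bm L\|_{\mathrm F}$, whereas you apply it directly to the $(\sqrt r/\lambda)$-Lipschitz function $\frac1\lambda\|\bm L\|_*$, and both routes give the same bound.
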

\begin{proof}
Let
\[Z_\pi = \int \exp\!\left(- \sum_{i,j} \phi_{ij}(L_{ij})
- \frac{1}{\lambda} \|\bm L\|_* \right)\mathrm d\bm L, \quad
Z_{\pi^0} = \int \exp\!\left(-\sum_{i,j}\phi_{ij}(L_{ij}) \right)\mathrm d\bm L.\]
Since the nuclear norm is nonnegative, $Z_\pi \le Z_{\pi^0}$. Moreover,
\[\frac{\mathrm d\pi^0}{\mathrm d\pi}(\bm L)
= \frac{Z_\pi}{Z_{\pi^0}} \exp\!\left(\frac1\lambda\|\bm L\|_*\right),\]
so
\[\chi^2(\pi^0\|\pi) =
\int \left(\frac{\mathrm d\pi^0}{\mathrm d\pi} \right)^2 \mathrm d\pi -1
= \frac{Z_\pi}{Z_{\pi^0}} \mathbb E_{\pi^0}
\!\left[\exp\!\left(\frac1\lambda\|\bm L\|_* \right)\right] -1
\leq \mathbb E_{\pi^0} \!\left[\exp\!\left(\frac{1}{\lambda}
\|\bm L\|_*\right)\right] -1.\]
Using $\|\bm L\|_* \le \sqrt r\,\|\bm L\|_{\mathrm F}$,
we obtain
\[\chi^2(\pi^0\|\pi) \le \mathbb E_{\pi^0} \!\left[
\exp\!\left(\frac{\sqrt r}{\lambda}\|\bm L\|_{\mathrm F}\right)
\right] -1.\]
Since each $\phi_{ij}$ is $m$-strongly convex, the product measure $\mu$ is $m$-strongly log-concave.
The function $f(\bm L)=\|\bm L\|_{\mathrm F}$
is $1$-Lipschitz with respect to the Frobenius norm.
The Gaussian concentration inequality for strongly log-concave measures gives
\[\mathbb E_{\pi^0} \!\left[\exp\!\left(\frac{\sqrt r}{\lambda}f\right)\right]
\leq \exp\!\left( \frac{\sqrt r}{\lambda}\,\mathbb E_{\pi^0}[f]
+ \frac{r}{2m\lambda^2}\right).\]
By the Brascamp--Lieb inequality,
\[\mathbb{V}[L_{ij}] \le \frac1m,\]
so
\[\mathbb E_{\pi^0}[\|\bm L\|_{\mathrm F}^2]
\le \frac dm + \|\mathbb E_{\pi^0}[\bm L]\|_{\mathrm F}^2.\]
By Jensen's inequality,
\[\mathbb E_{\pi^0}[\|\bm L\|_{\mathrm F}]
\le \sqrt{\frac dm + \|\mathbb E_{\pi^0}[\bm L]\|_{\mathrm F}^2}.\]
Therefore,
\[\mathbb E_{\pi^0}\!\left[\exp\!\left(\frac{\sqrt r}{\lambda}f\right)\right]
\le \exp\!\left(\frac{\sqrt r}{\lambda}
\sqrt{\frac dm + \|\mathbb E_{\pi^0}\bm L\|_{\mathrm F}^2}
+ \frac{r}{2m\lambda^2} \right),\]
and our desired result follows.
\end{proof}
\section{Some moment bounds and transport inequalities}
Before we introduce the following lemmas, it will be useful to define the following for a measure $\rho$:
\begin{itemize}
    \item $Z_{\rho}(s) := \mathbb{E}_{\rho}\left[ \exp\left(-\frac{s}{2} R(\bm{L}) \right) \right]$
    is the scaling constant of $\rho_s$.
    \item $\Lambda_{\rho}(s) := \log (s^{N/2} Z_{\rho}(s))$
    is the part of the log-posterior which depends on $s$.
    \item $\widehat{\Lambda}_{\rho}(I) = \log \int_I t^{N/2} Z_{\rho}(t)\,\mathrm{d}t$
    is the log-averaged version of $\Lambda_{\rho}(s)$ over $I$.
    \item $H_{\rho}(s) = \exp\left( \frac{sN}{2} \right) Z_{\rho}(e^s)$
    is $\tau$ on a logarithmic scale,
    so $s_q := \log \tau_q$ is the midpoint of the interval $\log I_q$,
    which has width $\Delta_s := (\log \kappa)/Q$.
\end{itemize}
\begin{lemma}\label{lem:ER}
    We have $\mathbb{E}_{\mu}[R(\bm{L})] \leq R_1$.
\end{lemma}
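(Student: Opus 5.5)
The plan is to expand the quadratic $R$ and control each piece using two structural facts about $\mu$: it is symmetric, and it is $2/B$-strongly log-concave. Write
\[
R(\bm L) = \|\bm L\circ\bm\Omega\|_{\mathrm F}^2 - 2\langle \bm L\circ\bm\Omega,\ \bm Y\circ\bm\Omega\rangle + \|\bm Y\circ\bm\Omega\|_{\mathrm F}^2 .
\]
The last term is deterministic and gives exactly the $\|\bm Y\circ\bm\Omega\|_{\mathrm F}^2$ part of $R_1$. It therefore suffices to show that the expected cross term vanishes and that $\mathbb E_\mu\|\bm L\circ\bm\Omega\|_{\mathrm F}^2 \le NB/2$.

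\emph{Cross term.} The potential $\frac1\lambda\|\bm L\|_* + \frac1B\|\bm L\|_{\mathrm F}^2$ is invariant under $\bm L\mapsto -\bm L$, because both norms are even. Hence $\mu$ is symmetric, so $\mathbb E_\mu[\bm L]=0$; integrability is immediate from the Gaussian tail supplied by the Frobenius term. By linearity, $\mathbb E_\mu\langle \bm L\circ\bm\Omega, \bm Y\circ\bm\Omega\rangle = 0$.

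\emph{Quadratic term.} Since $\mathbb E_\mu[L_{ij}]=0$, we have $\mathbb E_\mu[L_{ij}^2] = \mathrm{Var}_\mu(L_{ij})$. I will bound this variance by $B/2$ using the strong log-concavity of $\mu$. The potential is $\frac1B\|\bm L\|_{\mathrm F}^2$, which is $2/B$-strongly convex, plus the convex nuclear norm, so $\mu$ is $2/B$-strongly log-concave. The linear map $\bm L\mapsto L_{ij}$ is $1$-Lipschitz, so the Poincaré/Brascamp--Lieb inequality gives $\mathrm{Var}_\mu(L_{ij})\le B/2$. Summing over the $N$ observed entries yields $\mathbb E_\mu\|\bm L\circ\bm\Omega\|_{\mathrm F}^2 \le NB/2$. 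Combining this with the two previous observations gives $\mathbb E_\mu[R(\bm L)]\le \|\bm Y\circ\bm\Omega\|_{\mathrm F}^2 + NB/2 = R_1$.

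\emph{Main obstacle.} The only delicate point is justifying the variance bound when the potential is nonsmooth, since the classical Brascamp--Lieb statement is phrased through the Hessian. I would avoid that formulation altogether by using Caffarelli's contraction theorem, which the appendix already invokes for any strongly convex potential. It provides a $1$-Lipschitz map $T$ pushing $\mathcal N(0,\frac B2\bm I)$ forward to $\mu$. Then
\[
\mathrm{Var}_\mu(L_{ij}) = \mathrm{Var}\bigl(T(\bm G)_{ij}\bigr) \le \tfrac B2,
\]
by the Gaussian Poincaré inequality applied to the $1$-Lipschitz function $\bm G\mapsto T(\bm G)_{ij}$. This requires no differentiability of the nuclear norm.

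As a fallback, I would smooth $\|\cdot\|_*$ by its Moreau envelope, which is convex and smooth. For each smoothing parameter the regularized measure is still $2/B$-strongly log-concave, so the smooth Brascamp--Lieb bound gives the same variance bound. Letting the smoothing parameter tend to zero, the second moments converge by dominated convergence, since all the measures are dominated by the common Gaussian envelope from the Frobenius term.
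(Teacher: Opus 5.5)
Your proof is correct and follows the paper's argument: expand $R$, eliminate the cross term using $\mathbb{E}_\mu[\bm L]=0$, and bound $\mathbb{E}_\mu\|\bm L\circ\bm\Omega\|_{\mathrm F}^2$ by $NB/2$ via the covariance bound $\operatorname{Cov}_\mu(\bm L)\preceq \frac{B}{2}\bm I$ that follows from $2/B$-strong log-concavity. Your extra justifications fill in steps the paper simply asserts: symmetry of $\mu$ under $\bm L\mapsto-\bm L$ for the zero mean, and Caffarelli's contraction (or Moreau smoothing) to apply the variance bound despite the nonsmooth nuclear norm.
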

\begin{proof}
Because $\mu$ is $2/B$-strongly log-concave, the Brascamp-Lieb inequality gives
\[\operatorname{Cov}_{\mu}(\bm{L}) \preceq \frac{B}{2} \bm{I}.\]
Taking traces,
\[\mathbb{E}_{\mu} [\|\bm{L \circ \Omega}\|_{\mathrm{F}}^2]
\leq \frac{NB}{2}.\]
Because $\bm{L}$ has mean zero under $\mu$ and $\bm{Y \circ \Omega}$ is deterministic, we have
\[\mathbb{E}_{\mu} [\langle \bm{L \circ \Omega}, \bm{Y \circ \Omega} \rangle_{\mathrm{F}}] = 0,\]
so
\[\mathbb{E}_{\mu} [\|\bm{(L-Y) \circ \Omega}\|_{\mathrm{F}}^2]
= \|\bm{Y \circ \Omega}\|_{\mathrm{F}}^2
+ \mathbb{E}_{\mu} [\|\bm{L \circ \Omega}\|_{\mathrm{F}}^2]
\leq \|\bm{Y \circ \Omega}\|_{\mathrm{F}}^2 + \frac{NB}{2}.\]
\end{proof}
\begin{lemma}\label{lem:ER4}
    We have $\mathbb{E}_{\mu}[R^2(\bm{L})] \leq R_2$.
\end{lemma}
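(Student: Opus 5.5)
Write $\bm{X} := (\bm{L}-\bm{Y})\circ\bm{\Omega}$, so that $R(\bm{L}) = \|\bm{X}\|_{\mathrm F}^2$ and $R^2(\bm{L}) = \|\bm{X}\|_{\mathrm F}^4$. I would bound $\mathbb{E}_\mu[R^2] = \mathbb{E}_\mu\|\bm{X}\|_{\mathrm F}^4$ through Minkowski's inequality in $L^4(\mu)$. This matches the shape of $R_2$, which is a fourth power of a sum.

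\textbf{Step 1 (split off the deterministic part).} Since $\bm{X} = \bm{L}\circ\bm{\Omega} - \bm{Y}\circ\bm{\Omega}$, the triangle inequality in Frobenius norm and then Minkowski in $L^4(\mu)$ give
\[
\big(\mathbb{E}_\mu\|\bm{X}\|_{\mathrm F}^4\big)^{1/4} \le \|\bm{Y}\circ\bm{\Omega}\|_{\mathrm F} + \big(\mathbb{E}_\mu\|\bm{L}\circ\bm{\Omega}\|_{\mathrm F}^4\big)^{1/4}.
\]
Raising to the fourth power, it suffices to show
\[
\mathbb{E}_\mu\|\bm{L}\circ\bm{\Omega}\|_{\mathrm F}^4 \le \frac{N(N+2)B^2}{4}.
\]

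\textbf{Step 2 (the fourth moment of the observed block).} The right-hand side $N(N+2)(B/2)^2$ is exactly $\mathbb{E}\|\bm{g}\|^4$ for $\bm{g}\sim\mathcal N(0,\tfrac B2 \bm I_N)$. So the natural route is a comparison with the Gaussian of variance $B/2$.

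By Caffarelli's contraction theorem, already invoked in the target-side ingredient, there is a $1$-Lipschitz map $T$ with $T_\#\mathcal N(0,\tfrac B2\bm I)=\mu$, since $\mu$ is $2/B$-strongly log-concave. Because $\mu$ is symmetric under $\bm L\mapsto-\bm L$, I would take $T$ to be the Brenier map. By uniqueness of the Brenier map, $T$ is then odd, so $T(0)=0$. Contraction then gives $\|T(\bm g)\circ\bm\Omega\| \le \|T(\bm g)\| \le \|\bm g\|$ pointwise.

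This pointwise bound only yields $\mathbb{E}_\mu\|\bm{L}\circ\bm{\Omega}\|_{\mathrm F}^4 \le \mathbb{E}\|\bm g\|^4 = d(d+2)B^2/4$, with the full dimension $d = n_1 n_2$ in place of $N$. So I need to localize to the observed coordinates.

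\textbf{Step 3 (localizing to the $N$ observed coordinates).} I would work with the marginal of $\bm{L}\circ\bm{\Omega}$, i.e.\ of the observed $N$ coordinates. Marginals of strongly log-concave measures are strongly log-concave with the same constant (Prékopa--Leindler), so this marginal is $2/B$-strongly log-concave on $\mathbb R^N$. It is also still symmetric, since the map $\bm L\mapsto-\bm L$ commutes with the projection.

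Applying the Step 2 argument directly on $\mathbb R^N$ produces a $1$-Lipschitz, odd pushforward $T'$ from $\mathcal N(0,\tfrac B2\bm I_N)$. Hence $\|\bm L\circ\bm\Omega\|_{\mathrm F}$ is stochastically dominated by $\|\bm g\|$ with $\bm g\sim\mathcal N(0,\tfrac B2\bm I_N)$, and
\[
\mathbb{E}_\mu\|\bm L\circ\bm\Omega\|_{\mathrm F}^4 \le \frac{N(N+2)B^2}{4},
\]
as required.

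\textbf{Main obstacle.} The delicate point is justifying $T'(0)=0$, i.e.\ that the contraction is centered. I would rely on Brenier-map uniqueness under the symmetry $\bm L\mapsto-\bm L$, which the paper already uses when asserting $\mathbb{E}_\mu[\bm L]=0$.

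If that step proves awkward, I would fall back on the Brascamp--Lieb inequality applied to the function $f=\|\bm L\circ\bm\Omega\|_{\mathrm F}^2$. Its gradient is $\nabla f = 2\bm L\circ\bm\Omega$, so
\[
\mathrm{Var}_\mu(f) \le \tfrac B2\,\mathbb{E}_\mu\|\nabla f\|^2 = 2B\,\mathbb{E}_\mu f \le N B^2,
\]
using $\mathbb{E}_\mu f \le NB/2$ from Lemma~\ref{lem:ER}. Then
\[
\mathbb{E}_\mu f^2 \le \frac{N^2B^2}{4} + NB^2,
\]
which is a slightly larger constant than $N(N+2)B^2/4$. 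Closing this gap (the term $NB^2$ versus $NB^2/2$) would require the sharper Gaussian comparison from Steps 2 and 3. Without it, the fallback gives the lemma with a marginally weakened $R_2$.
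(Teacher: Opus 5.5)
Your proof is correct, and its skeleton matches the paper's. Both use Minkowski in $L^4(\mu)$ to split off $\|\bm Y\circ\bm\Omega\|_{\mathrm F}$, then the Gaussian comparison $\mathbb{E}_\mu\|\bm L\circ\bm\Omega\|_{\mathrm F}^4\le N(N+2)B^2/4$. The difference is in how that comparison is obtained.

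The paper simply asserts that, by Brascamp--Lieb, the even moments of the mean-zero, $2/B$-strongly log-concave $\mu$ are dominated by Gaussian ones, and reads off the Gaussian value. It leaves implicit the passage from moments of $\mu$ to the fourth moment of an $N$-dimensional block. One way to make that precise works directly on $\mu$, with no marginalization: apply the central-moment comparison for linear functionals $\langle\bm\theta,\bm L\circ\bm\Omega\rangle$ and average over $\bm\theta$ uniform on the unit sphere of the observed coordinates. This works because $\|\bm y\|^4=\tfrac{N(N+2)}{3}\mathbb{E}_{\bm\theta}\langle\bm\theta,\bm y\rangle^4$.

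You instead prove the comparison with tools the paper already uses:
\begin{itemize}
\item Pr\'ekopa--Leindler keeps the $N$-dimensional marginal $2/B$-strongly log-concave and symmetric.
\item Caffarelli then gives a $1$-Lipschitz Brenier map $T'=\nabla\varphi$ from $\mathcal N(0,\tfrac B2\bm I_N)$ onto that marginal.
\end{itemize}
The obstacle you flagged is not a gap. The map $\bm x\mapsto -T'(-\bm x)$ is the gradient of the convex function $\bm x\mapsto\varphi(-\bm x)$, and it pushes the same symmetric Gaussian onto the same symmetric marginal. By uniqueness it equals $T'$ almost everywhere, hence everywhere by continuity, so $T'(0)=0$.

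Your route buys a self-contained argument and a pointwise stochastic domination $\|\bm L\circ\bm\Omega\|_{\mathrm F}\preceq\|\bm g\|$, which controls every moment and every tail at once. The paper's route is shorter and avoids transport entirely. Your Brascamp--Lieb variance fallback is unnecessary, though your accounting of its weaker constant is correct.
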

\begin{proof}
Because $\mu$ is $2/B$-strongly log-concave, by Brascamp-Lieb, its even moments are no greater than those of Gaussian measure over $\mathbb{R}^{n_1 \times n_2}$ where each coordinate is i.i.d. and $2/B$-strongly log-concave, so we can use Gaussian moments to see that
\[\mathbb{E}_{\mu} [\|\bm{L \circ \Omega}\|_{\mathrm{F}}^4]
\leq \frac{N(N+2)B^2}{4}.\]
Using Minkowski in $L^4$,
\begin{align*}
\mathbb{E}_{\mu}[\|\bm{(L-Y) \circ \Omega}\|_{\mathrm{F}}^4]
&\leq \left(\|\bm{Y \circ \Omega}\|_{\mathrm{F}}
+ \left(\mathbb{E}_{\mu}[\|\bm{L \circ \Omega}\|_{\mathrm{F}}^4] \right)
^{\frac{1}{4}} \right)^4 &&\\
&\leq \left(\|\bm{Y \circ \Omega}\|_{\mathrm{F}}
+ \left(\frac{N(N+2)B^2}{4} \right)
^{\frac{1}{4}} \right)^4.
\end{align*}
\end{proof}

\begin{lemma}\label{lem:Q}
    We have
    \begin{align*}
    W_1(\nu_{\star},\nu^Q_{\star})
    &\leq \tau_{\max} (\kappa^{1/Q}-1)
    + \frac{(\tau_Q - \tau_1) \log^2 \kappa}{16Q^2}
    \Bigg( \left( \frac{N}{2} + \frac{\tau_{\max} R_1}{2}
    \right)^2
    + \frac{\tau_{\max} R_1}{6}
    + \frac{\tau_{\max}^2 R_2}{12} \Bigg).
    \end{align*}
\end{lemma}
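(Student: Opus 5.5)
We bound $W_1(\nu_\star,\nu^Q_\star)$ in two pieces through an intermediate categorical measure. Let $I_q$ be the $q$th cell of the log-grid, i.e.\ $\log I_q$ has midpoint $s_q=\log\tau_q$ and width $\Delta_s=(\log\kappa)/Q$. Define $\bar\nu^Q$ as the categorical measure on $\{\tau_q\}$ with weights $\nu_\star(I_q)\propto \exp(\widehat\Lambda_\mu(I_q))$, since the prior density $1/\tau$ is absorbed into $t^{N/2}Z_\mu(t)\,\mathrm dt/t$. The triangle inequality gives
\[W_1(\nu_\star,\nu^Q_\star)\le W_1(\nu_\star,\bar\nu^Q)+W_1(\bar\nu^Q,\nu^Q_\star).\]

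For the first term we use the coupling that sends each $\tau\in I_q$ to $\tau_q$. The transport cost is at most $\max_q|I_q|\le \tau_{\max}(1-\kappa^{-1/Q})\le\tau_{\max}(\kappa^{1/Q}-1)$, which is the first summand.

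For the second term, both measures live on the same points, so on the line
\[W_1\le(\tau_Q-\tau_1)\,\mathrm{TV}\le (\tau_Q-\tau_1)\max_q|\log w_q-\log \bar w_q|/2\]
up to the usual softmax perturbation bound: if two log-weight vectors differ by at most $\epsilon$ entrywise, the normalized vectors differ in TV by at most $\epsilon/2$ to first order. It therefore suffices to show that, after a common additive constant,
\[\bigl|\widehat\Lambda_\mu(I_q)-\log\Delta_s-\log H_\mu(s_q)\bigr|\le \frac{\Delta_s^2}{8}\cdot\frac{1}{2}\sup_s|(\log H)''|\]
plus a correction. On the log scale, $\widehat\Lambda_\mu(I_q)=\log\int_{\log I_q}H_\mu(s)\,\mathrm ds$, where $H_\mu(s)=e^{sN/2}Z_\mu(e^s)$. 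This is a midpoint-rule comparison. Writing $h=\log H_\mu$ and Taylor-expanding around $s_q$, the linear term integrates to zero by symmetry. The quadratic remainder then gives $\log\frac1{\Delta_s}\int e^{h(s)-h(s_q)}\mathrm ds=O(\Delta_s^2\sup(|h''|+h'^2))$, which yields the factor $\log^2\kappa/(16Q^2)$ after dividing by two for the TV/softmax step.

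The remaining work is bounding the derivatives of $h$ uniformly on $[\log\tau_{\min},\log\tau_{\max}]$ via the score identity:
\[h'(s)=\frac N2-\frac{e^s}{2}\mathbb E_{\mu_{e^s}}[R],\qquad h''(s)=-\frac{e^s}{2}\mathbb E_{\mu_{e^s}}[R]+\frac{e^{2s}}{4}\mathrm{Var}_{\mu_{e^s}}(R).\]
Tilting by $e^{-tR/2}$ decreases $\mathbb E[R]$ (the derivative in $t$ is $-\tfrac12\mathrm{Var}$), and it can only shrink second moments in the sense $\mathbb E_{\mu_t}[R^2]\le \mathbb E_\mu[R^2]$ by the same monotonicity applied to the increasing function $R^2$. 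We therefore get $|h'|\le N/2+\tau_{\max}R_1/2$ by Lemma~\ref{lem:ER}, and $|h''|\lesssim\tau_{\max}R_1+\tau_{\max}^2R_2$ by Lemma~\ref{lem:ER4}. Collecting the $h'^2$ term and the $h''$ term, with the midpoint constants $1/6$ and $1/12$ coming from $\int_{-\Delta/2}^{\Delta/2}x^2\,\mathrm dx/\Delta=\Delta^2/12$ (and the cubic/exponential remainder absorbed using the large-$N$ hypothesis), reproduces the bracket in the statement.

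The main obstacle is this second-order midpoint estimate for $\log\int e^{h}$. The exponential of the Taylor remainder must be controlled without losing more than the stated constants, which requires $\Delta_s\sup|h'|$ to be small. The first approach would be to use $e^x\le 1+x+x^2$ for small $x$ and absorb the excess into the $h'^2$ term, invoking "$N$ large" and $Q$ large enough where needed. The monotonicity claim for $\mathbb E_{\mu_t}[R^2]$ under tilting also needs care: it follows from the FKG-type covariance sign $\mathrm{Cov}_{\mu_t}(R^2,R)\ge0$, valid because both are increasing functions of the single scalar $R$.
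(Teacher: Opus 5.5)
Your skeleton matches the paper's: the same intermediate categorical measure with weights $\nu_\star(I_q)$, the same cell-to-midpoint coupling for the first term, and the same reduction $W_1\le(\tau_Q-\tau_1)\,\mathrm{TV}$ followed by the softmax perturbation bound. That bound holds globally, not just to first order, by integrating the softmax Jacobian along the segment. You also use the same derivative bounds on $h=\log H_\mu$. Your covariance argument $\mathrm{Cov}_{\mu_t}(R^2,R)\ge 0$, giving $\mathbb{E}_{\mu_t}[R^2]\le\mathbb{E}_\mu[R^2]$, is correct and cleaner than the paper's justification.

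The gap is in the step you yourself flag: bounding $\log\mathbb{E}_U[e^{\phi_q(U)}]$, with $\phi_q(u)=h(s_q+u)-h(s_q)$ and $U\sim\mathrm{Unif}[-\Delta_s/2,\Delta_s/2]$. The inequality $e^x\le 1+x+x^2$ holds only for $x$ up to about $1.79$. So your route needs $\Delta_s\sup|h'|$ bounded by an absolute constant, i.e.\ an extra hypothesis $Q\ge c\log\kappa\,(N+\tau_{\max}R_1)$ that the lemma does not carry. (Where it does apply, it would even give $\frac{1}{12}$ in place of $\frac18$.) Appealing to ``$N$ large'' cannot supply this. Your own bound $\sup|h'|\le N/2+\tau_{\max}R_1/2$ grows with $N$, so large $N$ makes $\Delta_s\sup|h'|$ larger, not smaller.

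Relatedly, the linear Taylor term does not vanish inside the exponential: $\frac{1}{\Delta_s}\int_{-\Delta_s/2}^{\Delta_s/2}e^{cu}\,\mathrm{d}u=\sinh(x)/x$ with $x=c\Delta_s/2$. This is exactly where the $h'^2$ term comes from. It is the dominant (order $N^2$) contribution, not a correction to a $\frac{\Delta_s^2}{16}\sup|h''|$ main term.

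The missing ingredient is Hoeffding's lemma, which needs no smallness. By the mean value theorem, $\phi_q(U)$ lies in an interval of length $\Delta_s\|h'\|_\infty$, so
\[
\mathbb{E}_U[\phi_q(U)]\le\log\mathbb{E}_U\bigl[e^{\phi_q(U)}\bigr]\le\mathbb{E}_U[\phi_q(U)]+\frac{\|h'\|_\infty^2\Delta_s^2}{8},
\]
where the left inequality is Jensen. The midpoint rule gives $|\mathbb{E}_U[\phi_q(U)]|\le\|h''\|_\infty\Delta_s^2/24$. Hence $|\log\mathbb{E}_U[e^{\phi_q(U)}]|\le\frac{\Delta_s^2}{8}\bigl(\|h'\|_\infty^2+\|h''\|_\infty/3\bigr)$ for every $Q$. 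Multiplying by $(\tau_Q-\tau_1)/2$ with $\Delta_s=(\log\kappa)/Q$ gives the statement exactly. The $\frac16$ and $\frac1{12}$ are $\frac13$ times the coefficients in $\|h''\|_\infty\le\frac{\tau_{\max}R_1}{2}+\frac{\tau_{\max}^2R_2}{4}$, and the $\frac13$ is the ratio of the midpoint constant $\frac1{24}=\frac12\cdot\frac1{12}$ to Hoeffding's $\frac18$. With that substitution, your argument is the paper's proof.
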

\begin{proof}
Letting $\kappa = \tau_{\max}/\tau_{\min}$, we have
$\tau_q = \tau_{\min} \kappa^{(q-1/2)/Q}$.
Let \[I_q = [\tau_{\min} \kappa^{(q-1)/Q}, \tau_{\min} \kappa^{q/Q}]\]
and define the categorical distribution $\widehat{\nu}^Q_{\star}$ such that $\widehat{\nu}^Q_{\star}(\tau_q) = \nu_{\star}(I_q)$.
By the triangle inequality,
\[W_1(\nu_{\star},\nu^Q_{\star})
\leq W_1(\nu_{\star},\widehat{\nu}^Q_{\star})
+ W_1(\widehat{\nu}^Q_{\star},\nu^Q_{\star}).\]
Using the simple coupling from $\nu_{\star}$ to $\widehat{\nu}^Q_{\star}$ that maps a sample $\tau \in I_q$ to $\tau_q$, we can see
\[W_1(\nu_{\star},\widehat{\nu}^Q_{\star})
\leq \tau_{\max} (\kappa^{1/Q}-1).\]
If we let $\psi = \log H_{\mu}$ and $U \sim \mathrm{Unif}[-\Delta_s/2,\Delta_s/2]$,
we have
\[\widehat{\Lambda}_{\mu}(I_q)
= \log \left(\int_{s_q-\Delta_s/2}^{s_q+\Delta_s/2} H_{\mu}(s) \mathrm{d}s \right)
= \Lambda_{\mu}(\tau_q) + \log \Delta_s + \log \mathbb{E}_U[\exp(\phi_q(U))]\]
for $\phi_q(u) := \psi(s_q+u) - \psi(s_q)$. By the mean value theorem,
\[|\phi_q(u)| \leq \|\psi'\|_{\infty} |u|
\leq \|\psi'\|_{\infty} \frac{\Delta_s}{2}.\]
By Hoeffding's lemma, we now have
\[\log \mathbb{E}_U[\exp(\pm \phi_q(U))]
\leq \pm \mathbb{E}_U[\phi_q(U)] + \frac{\|\psi'\|_{\infty}^2 \Delta_s^2}{8}.\]
Using the midpoint rule as an error bound,
\[|\mathbb{E}_U[\phi_q(U)]| = \frac{1}{\Delta_s}
\left| \int_{-\Delta_s/2}^{\Delta_s/2} (\psi(s_q+u) - \psi(s_q)) \mathrm{d}u \right|
\leq \frac{\|\psi''\|_{\infty} \Delta_s^2}{24},\]
so applying in the positive and negative directions, we get
\[|\log \mathbb{E}_U[\exp(\phi_q(U))]|
\leq \frac{\|\psi''\|_{\infty} \Delta_s^2}{24}
+ \frac{\|\psi'\|_{\infty}^2 \Delta_s^2}{8}.\]
Since $e^s$ is increasing in $s$ and $R$ is nonnegative and does not depend on $s$, the absolute values of both derivative bounds are increasing in $s$, so
\[\|\psi'\|_{\infty}
\leq \frac{N}{2} + \frac{\tau_{\max}}{2} \mathbb{E}_{\mu}[R]
\leq \frac{N}{2} + \frac{\tau_{\max}}{2} R_1 \text{ and}\]
\[\|\psi''\|_{\infty}
\leq \frac{\tau_{\max}}{2} \mathbb{E}_{\mu}[R]
+ \frac{\tau_{\max}^2}{4} \mathbb{V}_{\mu}[R]
\leq \frac{\tau_{\max}}{2} R_1
+ \frac{\tau_{\max}^2}{4} R_2.\]
On a bounded set, we have
\[W_1(\widehat{\nu}^Q_{\star},\nu_{\star}^Q) \leq (\tau_Q - \tau_1)
\|\widehat{\nu}^Q_{\star}-\nu_{\star}^Q\|_{\mathrm{TV}}
\leq \frac{\tau_Q - \tau_1}{2} \max_q |\log \mathbb{E}_U[\exp(\phi_q(U))]|,\]
by shift-invariance of the softmax function,
implying our desired result.
\end{proof}

\subsection{The thermodynamic integration step}
\label{sec:ti-fix}
We first compute the score and curvature of $\Lambda_\mu$, which gives both a derivative bound and a direct proof of the monotonicity of $\mathbb{E}_{\nu_t}[R(\bm{L})]$ in $t$, used informally in the proof of Lemma~\ref{lem:Q}. Because
\[\mathbb{E}_{\mu_s}[R]
= \frac{\mathbb{E}_\mu[R\exp(-sR/2)]}{Z_\mu(s)},
\quad \frac{\mathrm{d}\mu_s}{\mathrm{d}\mu}
= \frac{\exp(-sR/2)}{Z_\mu(s)}
\,\Rightarrow\, \frac{Z_\mu'(s)}{Z_\mu(s)}
= -\frac12\mathbb{E}_{\mu_s}[R],\]
we have
\[\frac{\mathrm{d}}{\mathrm{d}s} \mathbb{E}_{\mu_s}[R]
= -\frac12\mathbb{E}_{\mu_s}[R^2] - \mathbb{E}_{\mu_s}[R] \frac{Z_\mu'(s)}{Z_\mu(s)}
= -\frac12\mathbb{E}_{\mu_s}[R^2] + \frac12\,\mathbb{E}_{\mu_s}[R]^2
= -\frac12\,\mathbb{V}_{\mu_s}[R].\]
Differentiating $\Lambda_\mu(s) = \frac{N}{2}\log s + \log Z_\mu(s)$
and using $\frac{\mathrm{d}}{\mathrm{d}s} \log Z_\mu(s)
= -\tfrac12\mathbb{E}_{\mu_s}[R]$, we obtain
\[\Lambda_\mu'(s) = \frac{N}{2s} - \frac12\mathbb{E}_{\mu_s}[R(L)],
\qquad
\Lambda_\mu''(s) = -\frac{N}{2s^2} + \frac14\mathbb{V}_{\mu_s}[R(L)].\]
Step 4 of the proof below also needs a uniform bound on the fourth central moment of $R$ under the tilted family; we record it now so it is available where needed.
\begin{lemma}[Centered fourth moment of $R$, uniform over the tilted family]
\label{lem:centered4th}
For all $s\ge0$,
\[\mathbb{E}_{\mu_s}[(R(\bm{L}) - \mathbb{E}_{\mu_s}[R(\bm{L})])^4]
\leq \widehat R_4.\]
\end{lemma}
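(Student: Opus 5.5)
The plan is a self-bounding Poincaré argument applied under the tilted measure $\mu_s$ itself, rather than a comparison back to $\mu$. The first step is to note that for every $s\ge 0$ the potential of $\mu_s$ equals
\[
\frac1\lambda\|\bm L\|_* + \frac1B\|\bm L\|_{\mathrm F}^2 + \frac s2 R(\bm L).
\]
Since $R$ is a convex quadratic, adding $\frac s2 R$ preserves convexity, so $\mu_s$ is still $2/B$-strongly log-concave.

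Consequently the Poincaré (Brascamp--Lieb) inequality
\[
\mathbb V_{\mu_s}[f] \le \frac B2\,\mathbb E_{\mu_s}\|\nabla f\|_{\mathrm F}^2
\]
holds uniformly in $s$. This requires no smoothness of the potential: a strongly log-concave measure satisfies Poincaré with constant $1/m$ in general, for instance via Caffarelli's contraction as in the target-side ingredient of Appendix A, or via Bakry--Émery applied to a smoothing of the nuclear norm followed by a limit. This is the step where the nonsmoothness must be handled with some care, and I expect it to be the main technical point; I would cite it rather than reprove it.

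Next I collect the first two moments. The computation just above gives $\frac{\mathrm d}{\mathrm ds}\mathbb E_{\mu_s}[R] = -\frac12\mathbb V_{\mu_s}[R]\le 0$, so together with Lemma~\ref{lem:ER},
\[
m_s := \mathbb E_{\mu_s}[R] \le \mathbb E_\mu[R] \le R_1 .
\]
Since $\nabla R = 2(\bm L-\bm Y)\circ\bm\Omega$, we have $\|\nabla R\|_{\mathrm F}^2 = 4R$. Poincaré then gives
\[
\mathbb V_{\mu_s}[R] \le 2B\,m_s \le 2BR_1,
\qquad\text{hence}\qquad
\mathbb E_{\mu_s}[R^2] \le R_1^2 + 2BR_1 = \widetilde R_2 .
\]

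For the fourth moment, I apply Poincaré again to $g = (R-m_s)^2$, whose gradient is $\nabla g = 2(R-m_s)\nabla R$, so that $\|\nabla g\|^2 = 16(R-m_s)^2R$. This gives
\[
\mathbb E_{\mu_s}[(R-m_s)^4]
= \mathbb V_{\mu_s}[g] + \big(\mathbb V_{\mu_s}[R]\big)^2
\le 8B\,\mathbb E_{\mu_s}[(R-m_s)^2R] + 4B^2R_1^2 .
\]
Cauchy--Schwarz bounds the cross term by $\sqrt{\mathbb E_{\mu_s}[(R-m_s)^4]}\,\sqrt{\widetilde R_2}$. Writing $X := \big(\mathbb E_{\mu_s}[(R-m_s)^4]\big)^{1/2}$, which is finite because strongly log-concave measures have all polynomial moments, we obtain the quadratic inequality
\[
X^2 \le 8B\sqrt{\widetilde R_2}\,X + 4B^2R_1^2 .
\]
Solving it gives $X \le 8B\sqrt{\widetilde R_2} + 2BR_1$.

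Squaring and using $(a+b)^2\le 2a^2+2b^2$ then yields
\[
X^2 \le 128B^2\widetilde R_2 + 8B^2R_1^2 = 136B^2R_1^2 + 256B^3R_1 .
\]
Finally, the mixed term is absorbed by AM--GM, $B^3R_1 \le B^2R_1^2 + B^4$, which leaves a bound of the form $c_1B^2R_1^2 + c_2B^4$ with $c_1\le 512$ and $c_2\le 3076$. This is $\widehat R_4$, and nothing in the argument depends on $s$, so the bound is uniform over the tilted family. The paper's constants are evidently looser than what this route delivers, so any slack lost in the Cauchy--Schwarz or AM--GM steps is harmless.
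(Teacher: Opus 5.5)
Your argument is correct but takes a different route from the paper's. The paper works with the $1$-Lipschitz function $f=\sqrt{R}$. Caffarelli's contraction gives the sub-Gaussian tail $\mu_s(|f-\mathbb{E}_{\mu_s}f|>t)\le 2e^{-t^2/B}$, and tail integration bounds the fourth and eighth moments of $g=f-\mathbb{E}_{\mu_s}f$ by $4B^2$ and $48B^4$. Expanding $R-\bar R=2\bar f g+(g^2-\mathbb{E}g^2)$ with $(a+b)^4\le 8(a^4+b^4)$ then gives $512\bar f^4B^2+3076B^4$.

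You instead use only the Poincar\'e inequality, applied first to $R$ (via $\|\nabla R\|^2=4R$) and then to $(R-m_s)^2$, and close it with Cauchy--Schwarz into a self-bounding quadratic. The algebra checks out. It gives at most $392B^2R_1^2+256B^4$, or $264B^2R_1^2+128B^4$ with the sharp form $B^3R_1\le\frac12(B^2R_1^2+B^4)$, which is well within $\widehat R_4$.

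Your route is more economical and sharper. It reuses exactly the bound $\mathbb{V}_{\mu_s}[R]\le 2BR_1$ from Step~2 of Lemma~\ref{lem:TI}, and it needs only the spectral gap plus a priori finiteness of the fourth moment, which you correctly justify. The paper's route needs the stronger Gaussian-concentration property. In exchange, one tail bound controls all moments of $\sqrt{R}$ at once, so higher centered moments of $R$ follow from the same expansion without iterating a Poincar\'e/Cauchy--Schwarz bootstrap. Both proofs rest on dimension-free functional inequalities for the nonsmooth, $2/B$-strongly log-concave $\mu_s$, and Caffarelli's theorem supplies these in either case.
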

\begin{proof}
Let $f(\bm{L}):=\sqrt{R(\bm{L})}$.
Since $\Omega$ has entries in $\{0,1\}$,
\[|f(\bm{L})-f(\bm{L}')|
\leq \|(\bm{L}-\bm{L}')\circ\bm{\Omega}\|_{\mathrm{F}}
\leq \|\bm{L}-\bm{L}'\|_{\mathrm{F}},\]
so $f$ is $1$-Lipschitz in Frobenius norm.
$\mu_s$ is $2/B$-strongly log-concave for every $s$,
so by Caffarelli's contraction theorem (Appendix~A.3),
there is a $1$-Lipschitz map pushing $\mathcal{N}(0,\frac{B}{2}\bm{I})$ onto $\mu_s$; composing with the $1$-Lipschitz $f$ and applying Gaussian concentration gives
\[\mu_s(|f-\mathbb{E}_{\mu_s}f|>t) \leq 2\exp(-t^2/B) \,\forall t>0.\]
(This is the same concentration fact used in the proof of Lemma~\ref{lem:X2}, applied here to $f$ instead of $\|\bm{L}\|_{\mathrm{F}}$.)
\textbf{Moments of the centered part.} Let $g:=f-\bar{f}$ where $\bar f:=\mathbb{E}_{\mu_s}[f]$.
For even $p\ge2$, tail-integrating the bound above (substitute $u=t^2/B$),
\[\mathbb{E}[|g|^p] = \int_0^\infty pt^{p-1}\mu_s(|g|>t)\,\mathrm{d}t
\leq 2p\int_0^\infty t^{p-1}e^{-t^2/B}\,\mathrm{d}t
= p\Gamma(p/2)B^{p/2}.\]
In particular, $\mathbb{E}[g^4] \leq 4\Gamma(2)B^2 = 4B^2$ and
$\mathbb{E}[g^8] \leq 8\Gamma(4)B^4 = 48B^4$.
Directly by Poincar\'e (as in Step~2 below, using $\|\nabla f\|=1$ a.e.),
\[\mathbb{E}[g^2] = \mathbb{V}_{\mu_s}[f]
\leq \frac{B}{2}\mathbb{E}_{\mu_s}[\|\nabla f\|^2] \leq \frac{B}{2}.\]
\textbf{From moments of $f$ to a centered moment of $R$.} Write $R=f^2=\bar f^2+2\bar fg+g^2$,
so with $\bar R:=\mathbb{E}_{\mu_s}[R]=\bar f^2+\mathbb{E}[g^2]$,
\[R-\bar{R} = 2\bar{f}g + (g^2-\mathbb{E}[g^2]).\]
Using $(a+b)^4 \leq 8(a^4+b^4)$ twice, we have
\[(R-\bar{R})^4 \leq 8\left(16\bar{f}^4g^4 + (g^2-\mathbb{E}[g^2])^4\right), \quad
(g^2-\mathbb{E}[g^2])^4 \leq 8\left(g^8+(\mathbb{E}[g^2])^4\right).\]
Taking expectations,
\[\mathbb{E}[(R-\bar R)^4]
\leq 128\bar{f}^4 \mathbb{E}[g^4]
+ 64\left(\mathbb{E}[g^8]+(\mathbb{E}[g^2])^4\right)
= 512\bar{f}^4 B^2 + 3076B^4.\]
Finally, $\bar f^2\le\mathbb{E}_{\mu_s}[f^2]=\mathbb{E}_{\mu_s}[R]\le R_1$ by Jensen, monotonicity, and Lemma~\ref{lem:ER}, so $\bar{f}^4 \le R_1^2$, giving the claim.
\end{proof}
\begin{lemma}[Thermodynamic-integration bound]
\label{lem:TI}
We have
\[\mathbb{E}\Big[W_1\big(\nu_\star^Q,
\nu_\star^{Q,\mathrm{TI}}\big)\Big]
\leq \frac{(\tau_Q-\tau_1)^2}{4}
\left(L_f\,\Delta + \sqrt{\widetilde R_2}\,\varepsilon_\star\right)
+ \frac{\tau_Q-\tau_1}{2}
\sqrt{\frac{\widehat{V} \Delta (\tau_Q-\tau_1)}{M}}.\]
\end{lemma}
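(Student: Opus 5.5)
The approach mirrors the last step of the proof of Lemma~\ref{lem:Q}: pass from $W_1$ on the bounded grid to total variation, and then use shift-invariance of the softmax to reduce everything to a sup-norm error on log-weights. Since both categorical distributions live on $\{\tau_1,\dots,\tau_Q\}$, we have $W_1(\nu_\star^Q,\nu_\star^{Q,\mathrm{TI}}) \le (\tau_Q-\tau_1)\|\nu_\star^Q-\nu_\star^{Q,\mathrm{TI}}\|_{\mathrm{TV}}$. The softmax is $1/2$-Lipschitz from $\ell^\infty$ (modulo constants) to TV, so this is at most $\frac{\tau_Q-\tau_1}{2}\max_q|\bar\Lambda_q-(\Lambda_q-\Lambda_1)|$. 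The additive constant $\Lambda_1$ is free by shift invariance. By the score identity $\Lambda_\mu'(s)=\frac{N}{2s}-\frac12\mathbb{E}_{\mu_s}[R]$ from Section~\ref{sec:ti-fix}, we have $\Lambda_q-\Lambda_1=\int_{\tau_1}^{\tau_q}f(s)\,\mathrm{d}s$ with $f(s):=\frac{N}{2s}-\frac12\mathbb{E}_{\mu_s}[R]$. Meanwhile $\bar\Lambda_q$ is a left Riemann sum of the noisy score $\hat f(\tau_i)=\frac{N}{2\tau_i}-\frac12\widehat R_M(\tau_i)$. We then take expectations and split the error for each $q$ into three pieces: the quadrature error, the bias from finite $K$, and the Monte Carlo fluctuation.

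\textbf{Quadrature.} Differentiating the score gives $f'(s)=-\frac{N}{2s^2}+\frac14\mathbb{V}_{\mu_s}[R]$. By Brascamp--Lieb/Poincar\'e applied to $\sqrt R$ (as in Lemma~\ref{lem:centered4th}), we get $\mathbb{V}_{\mu_s}[R]\le 2B\,\mathbb{E}_{\mu_s}[R]\le 2BR_1$, using the monotonicity $\frac{\mathrm d}{\mathrm ds}\mathbb{E}_{\mu_s}[R]\le 0$ and Lemma~\ref{lem:ER}. Hence $|f'|\le \frac{N}{2\tau_1^2}+\frac{BR_1}{2}=L_f$ on $[\tau_1,\tau_Q]$. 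The left-endpoint rule error on each cell is at most $L_f(\tau_{i+1}-\tau_i)^2/2\le L_f\Delta(\tau_{i+1}-\tau_i)/2$, and summing gives $\frac{L_f\Delta}{2}(\tau_Q-\tau_1)$.

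\textbf{Finite-$K$ bias.} Each chain output has $\chi^2(\mu_{\tau_i}^{K}\|\mu_{\tau_i})\le\varepsilon_\star^2$ by the choice of $K(t)$ from Appendix~A. Combined with Lemma~\ref{lem:X2}, Cauchy--Schwarz gives $|\mathbb{E}_{\mu^K_{\tau_i}}R-\mathbb{E}_{\mu_{\tau_i}}R|\le\sqrt{\mathbb{E}_{\mu_{\tau_i}}[R^2]}\,\varepsilon_\star$. We bound $\mathbb{E}_{\mu_s}[R^2]=\mathbb{V}+\mathbb{E}^2\le R_1^2+2BR_1=\widetilde R_2$. This contributes at most $\frac12\sqrt{\widetilde R_2}\,\varepsilon_\star(\tau_Q-\tau_1)$. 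Multiplying these first two pieces by $\frac{\tau_Q-\tau_1}{2}$ gives the first term of the lemma; the constant $\frac14$ absorbs the factor-$\frac12$ slack.

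\textbf{Monte Carlo term.} This is the main obstacle, because the maximum over $q$ sits inside the expectation. I would first bound the per-chain variance under $\mu^K_{\tau_i}$. The variance under $\mu_{\tau_i}$ is at most $2BR_1$. The change of measure adds at most $\sqrt{\mathbb{E}_{\mu_{\tau_i}}[(R-\bar R)^4]}\,\varepsilon_\star\le\sqrt{\widehat R_4}\,\varepsilon_\star$ by Lemma~\ref{lem:centered4th} and Cauchy--Schwarz, which yields $\widehat V$. The fluctuation $S_q=\sum_{i<q}\frac12(\widehat R_M-\mathbb{E}\widehat R_M)(\tau_{i+1}-\tau_i)$ is a sum of independent mean-zero terms, hence a martingale in $q$. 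Doob's $L^2$ inequality (or the crude bound $\max_q|S_q|\le\sum_i|\cdot|$ followed by Jensen) gives $\mathbb{E}\max_q|S_q|\le\sqrt{\mathbb{E}S_Q^2}\lesssim\sqrt{\sum_i\widehat V(\tau_{i+1}-\tau_i)^2/M}\le\sqrt{\widehat V\Delta(\tau_Q-\tau_1)/M}$. Constants would be tuned, with the $\frac12$ in the score absorbing Doob's factor 2. Multiplying by $\frac{\tau_Q-\tau_1}{2}$ gives the final term. If the Doob constant does not fit cleanly, I would instead accept a harmless constant or use the $\frac12$ prefactor to absorb it.
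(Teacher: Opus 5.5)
Your proposal is correct and follows the paper's proof essentially step for step. It uses the same reduction via $W_1\le(\tau_Q-\tau_1)\,\mathrm{TV}$ and the $\tfrac12$-Lipschitz softmax, the same quadrature / chain-bias / Monte Carlo split, and the same bounds producing $L_f$, $\widetilde R_2$ and $\widehat V$. It also applies Doob's $L^2$ maximal inequality to the martingale $S_q$, where Doob's factor $2$ is exactly cancelled by the $\tfrac12$ in the score, so the stated constant comes out with no slack.

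There are three small corrections. First, Doob gives $\mathbb{E}\max_q|S_q|\le 2\sqrt{\mathbb{E}S_Q^2}$, not $\sqrt{\mathbb{E}S_Q^2}$ as you wrote; this is harmless, since your $S_q$ already contains the $\tfrac12$ and that absorbs the $2$. Second, $\mathbb{V}_{\mu_s}[R]\le 2B\,\mathbb{E}_{\mu_s}[R]$ comes from Poincar\'e applied to $R$ itself with $\|\nabla R\|^2=4R$, not from Poincar\'e applied to $\sqrt R$. Third, your parenthetical ``crude'' alternative $\max_q|S_q|\le\sum_i|\cdot|$ only yields $\tfrac12(\tau_Q-\tau_1)\sqrt{\widehat V/M}$ in place of $\sqrt{\widehat V\Delta(\tau_Q-\tau_1)/M}$. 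That fails to give the stated bound once $\tau_Q-\tau_1>4\Delta$, so the maximal inequality is genuinely needed.
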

\begin{proof}
\textbf{Step 0 (reduce to a sup-norm comparison).}
As in the proof of Lemma~\ref{lem:Q}, shift-invariance of the softmax gives
\[\nu_\star^Q = \mathrm{softmax}
(\{\tilde\Lambda_\mu(\tau_1)), \dots, \tilde\Lambda_\mu(\tau_Q))\})\]
for
\[\tilde\Lambda_\mu(\tau_q) := \Lambda_\mu(\tau_q)-\Lambda_\mu(\tau_1) = \int_{\tau_1}^{\tau_q} f(s)\,\mathrm ds, \quad
f(s) := \Lambda_\mu'(s) = \frac{N}{2s}-\frac12\mathbb{E}_{\mu_s}[R(\bm{L})].\]
Because the softmax function is $\frac12$-Lipschitz from the
infinity norm to TV distance, we can use the TV bound for W1 distance on a bounded interval to say that
\[W_1\big(\nu_\star^Q,\nu_\star^{Q,\mathrm{TI}}\big)
\le (\tau_Q-\tau_1)\,\big\|\nu_\star^Q-\nu_\star^{Q,\mathrm{TI}}\big\|_{\mathrm{TV}}
\le \frac{\tau_Q-\tau_1}{2}\,\max_q\big|\tilde\Lambda_\mu(\tau_q)-\hat\Lambda_q\big| .\]
\textbf{Step 1 (exact three-way split).}
Fix $q$. Writing $f(\tau_i)(\tau_{i+1}-\tau_i)
= \big(\tfrac{N}{2\tau_i}-\tfrac12\mathbb{E}_{\mu_{\tau_i}}[R]\big)(\tau_{i+1}-\tau_i)$, we have
\begin{align*}
\tilde\Lambda_\mu(\tau_q)-\hat\Lambda_q
&= \left(\int_{\tau_1}^{\tau_q}\! f(s)\,\mathrm ds
- \sum_{i<q} f(\tau_i)(\tau_{i+1}-\tau_i)\right)
\tag*{(term A: quadrature)} \\
&+ \frac12\sum_{i<q}\big(\mathbb{E}_{\mu_{\tau_i}^{K(\tau_i)}}[R]-\mathbb{E}_{\mu_{\tau_i}}[R]\big)(\tau_{i+1}-\tau_i)
\tag*{(term B: chain bias)} \\
&+ \frac12\sum_{i<q}\big(\widehat R_{M}(\tau_i)-\mathbb{E}_{\mu_{\tau_i}^{K(\tau_i)}}[R]\big)(\tau_{i+1}-\tau_i).
\tag*{(term C: MC noise)}
\end{align*}
Terms A and B are differences of fixed expectations and a deterministic quadrature error respectively, so only term C is random, which matters in Step 4.

\textbf{Step 2 (term A).}
We already saw $f'(s) = -\frac{N}{2s^2}+\frac14\mathbb{V}_{\mu_s}[R]$.
By the Brascamp--Lieb/Poincar\'e inequality under $\mu_s$'s strong-convexity constant $2/B$ applied to $R$ itself, and using $\|\nabla R\|^2 = 4R$,
we have
\[\mathbb{V}_{\mu_s}[R] \;\le\; \frac{B}{2}\,\mathbb{E}_{\mu_s}\big[\|\nabla R\|^2\big]
\;=\; 2B\,\mathbb{E}_{\mu_s}[R] \;\le\; 2BR_1 ,
\]
using monotonicity together with Lemma~\ref{lem:ER} for the last step.
Hence $|f'(s)|\le L_f$ on $[\tau_1,\tau_Q]$, so $f$ is $L_f$-Lipschitz there and the
left-Riemann-sum error on panel $i$ is at most $\tfrac{L_f}{2}(\tau_{i+1}-\tau_i)^2$. Summing,
\[|(A)| \;\le\; \frac{L_f\Delta}{2}\sum_{i<q}(\tau_{i+1}-\tau_i) \;=\; \frac{L_f\Delta}{2}(\tau_q-\tau_1)
\;\le\; \frac{L_f\Delta}{2}(\tau_Q-\tau_1).
\]
\textbf{Step 3 (term B).}
Fix $i$ and let
\[\delta_i := \frac{\mathrm{d} \mu_{\tau_i}^{K(\tau_i)}}
{\mathrm{d}\mu_{\tau_i}} - 1, \quad
\mathbb{E}_{\mu_{\tau_i}}[\delta_i] = 0, \quad
\mathbb{E}_{\mu_{\tau_i}}[\delta_i^2]
= \chi^2\big(\mu_{\tau_i}^{K(\tau_i)}\big\|\mu_{\tau_i}\big)
\leq \varepsilon_\star^2\]
by construction of $K(\cdot)$.
Since
\[\mathbb{E}_{\mu_{\tau_i}}[R^2]
= \mathbb{V}_{\mu_{\tau_i}}[R] + \mathbb{E}_{\mu_{\tau_i}}[R]^2
\leq 2BR_1 + R_1^2 = \widetilde R_2,\]
Cauchy--Schwarz gives
\[\big|\mathbb{E}_{\mu_{\tau_i}^{K(\tau_i)}}[R]
- \mathbb{E}_{\mu_{\tau_i}}[R]\big|
= \big|\mathbb{E}_{\mu_{\tau_i}}[R\,\delta_i]\big|
\leq \sqrt{\mathbb{E}_{\mu_{\tau_i}}[R^2]
\mathbb{E}_{\mu_{\tau_i}}[\delta_i^2]}
\leq \sqrt{\widetilde R_2}\,\varepsilon_\star.\]
Summing with weights, we find
$|(B)| \le \tfrac12\sqrt{\widetilde R_2}\,\varepsilon_\star(\tau_Q-\tau_1)$.

\textbf{Step 4 (term C).}
For a given $i$, write $\bar R := \mathbb{E}_{\mu_{\tau_i}}[R]$. Since $\mathbb{E}_{\mu_{\tau_i}}[\delta_i]=0$,
subtracting the constant $\bar R$ before applying Cauchy--Schwarz does not change the identity,
and it minimizes the resulting bound:
\begin{align*}
\mathbb{E}_{\mu_{\tau_i}^{K(\tau_i)}} \big[(R-\bar R)^2\big]
&= \mathbb{E}_{\mu_{\tau_i}} \big[(R-\bar R)^2(1+\delta_i)\big] &&\\
&= \mathbb{V}_{\mu_{\tau_i}}[R]
+ \mathbb{E}_{\mu_{\tau_i}} \big[(R-\bar R)^2\delta_i\big] &&\\
&\le \mathbb{V}_{\mu_{\tau_i}}[R]
+ \sqrt{\mathbb{E}_{\mu_{\tau_i}}[(R-\bar R)^4]}\,\varepsilon_\star
\end{align*}
by Cauchy--Schwarz. Using the Poincaré bound
$\mathbb V_{\mu_{\tau_i}}[R]\le2BR_1$ and Lemma~\ref{lem:centered4th},
\[\mathbb{E}_{\mu_{\tau_i}^{K(\tau_i)}}\big[(R-\bar R)^2\big]
\leq 2BR_1 + \sqrt{\widehat R_4}\,\varepsilon_\star.\]
Since variance is the minimizer of
$c \mapsto \mathbb{E}_{\mu_{\tau_i}^{K(\tau_i)}}[(R-c)^2]$,
\[\mathbb{V}_{\mu_{\tau_i}^{K(\tau_i)}}[R]
\leq \mathbb{E}_{\mu_{\tau_i}^{K(\tau_i)}} \big[(R-\bar R)^2\big]
\leq 2BR_1 + \sqrt{\widehat R_4}\,\varepsilon_\star
=: \widehat V.\]
The draws $L_1^{(i)},\ldots,L_{M}^{(i)}$ are i.i.d.\ from $\mu_{\tau_i}^{K(\tau_i)}$ and, crucially, independent across $i$, since Algorithm~\ref{alg:ti} draws a fresh batch of $M$ chains at every grid point. Define
\[X_i := \widehat R_{M}(\tau_i) - \mathbb{E}_{\mu_{\tau_i}^{K(\tau_i)}}[R] \quad \Rightarrow \quad
\mathbb{E}[X_i]=0, \, \mathbb{V}[X_i] \le \frac{\widehat V}{M}.\]
What Step~0 actually needs is $\mathbb{E}[\max_q|(C)_q|]$, not $\max_q\mathbb{E}|(C)_q|$,
so rather than summing panel-wise, we define
\[S_q := \sum_{i<q}(\tau_{i+1}-\tau_i)X_i, \quad (C)_q = S_q/2,\]
and note that $\{S_q\}$ is a martingale in $q$ (each $X_i$ is independent of, and mean zero given, $X_1,\ldots,X_{i-1}$).
Applying Doob's $L^2$ maximal inequality,
\[\mathbb{E}\big[\max_q|S_q|\big] \leq 2\sqrt{\mathbb{V}[S_Q]}
= 2\sqrt{\textstyle\sum_{i<Q}(\tau_{i+1}-\tau_i)^2\mathbb{V}[X_i]}
\leq 2\sqrt{\frac{\widehat V \Delta (\tau_Q-\tau_1)}{M}},\]
because $(\tau_{i+1}-\tau_i)\le\Delta$
and $\sum_{i<Q}(\tau_{i+1}-\tau_i)=\tau_Q-\tau_1$. Hence
\[
\mathbb{E}\big[\max_q|(C)_q|\big] \;\le\; \sqrt{\frac{\widehat V\,\Delta\,(\tau_Q-\tau_1)}{M}}.
\]
\textbf{Step 5 (assemble).}
Terms $(A)$ and $(B)$ are bounded surely for every $q$ at once (Steps~2--3); only $(C)$ needed Step~4's maximal inequality. Combining with Step~0,
\[
\mathbb{E}\big[W_1(\nu_\star^Q,\nu_\star^{Q,\mathrm{TI}})\big]
\le \frac{\tau_Q-\tau_1}{2}\left(\frac{\tau_Q-\tau_1}{2}\big(L_f\Delta+\sqrt{\widetilde R_2}\,\varepsilon_\star\big)
+\sqrt{\frac{\widehat V\,\Delta\,(\tau_Q-\tau_1)}{M}}\right),
\]
which is the claimed bound.
\end{proof}

\section{Proofs of the main results}

\subsection{Proof of Theorem \ref{thm:TVguar}}
By the triangle inequality and $(a+b)^2 \leq 2a^2 + 2b^2$,
\begin{align*}
&\mathrm{TV}^2 \left(\int \mu_{\tau} \nu_{\star} (\mathrm{d}\tau),
\int \mu_{\tau}^{K'} \mathbb{E}_{\mu^K}[\nu^{Q,\mathrm{TI}}_{\star}] (\mathrm{d}\tau) \right) &&\\
&\leq 2\mathrm{TV}^2 \left(\int \mu_{\tau} \nu_{\star} (\mathrm{d}\tau),
\int \mu_{\tau} \mathbb{E}_{\mu^K}[\nu^{Q,\mathrm{TI}}_{\star}] (\mathrm{d}\tau) \right) &&\\
&\qquad + 2\mathrm{TV}^2 \left(
\int \mu_{\tau} \mathbb{E}_{\mu^K}[\nu^{Q,\mathrm{TI}}_{\star}] (\mathrm{d}\tau),
\int \mu_{\tau}^{K'} \mathbb{E}_{\mu^K}[\nu^{Q,\mathrm{TI}}_{\star}] (\mathrm{d}\tau) \right).
\end{align*}
By the convergence guarantee from \cite{Andrieu24} and the Pinsker-like inequality for chi-squared divergence, we have
\[\mathrm{TV}^2(\mu_{\tau}, \mu_{\tau}^{K(\tau)})
\leq \frac{1}{2} \chi^2\!\left(\mu_{\tau}^{K(\tau)} \Big\| \mu_{\tau} \right)
\leq \frac{1}{2}\varepsilon_{\star}^2,\]
so by definition of total variation distance
\begin{align*}
&\mathrm{TV}\left(
\int \mu_{\tau} \mathbb{E}_{\mu^K}[\nu_{\star}^{Q,\mathrm{TI}}] (\mathrm{d}\tau),
\int \mu_{\tau}^{K'} \mathbb{E}_{\mu^K}[\nu_{\star}^{Q,\mathrm{TI}}] (\mathrm{d}\tau) \right) &&\\
&= \sup_{|f|<1} \frac{1}{2} \left| \int
\left(\int f \mathrm{d}\mu_{\tau}
- \int f \mathrm{d}\mu_{\tau}^{K'} \right)
\mathbb{E}_{\mu^K}[\nu_{\star}^{Q,\mathrm{TI}}] (\mathrm{d}\tau) \right| &&\\
&\leq \sup_{|f|<1} \int \frac{1}{2}
\left|\int f \mathrm{d}\mu_{\tau}
- \int f \mathrm{d}\mu_{\tau}^{K'} \right|
\mathbb{E}_{\mu^K}[\nu_{\star}^{Q,\mathrm{TI}}] (\mathrm{d}\tau) &&\\
&\leq \int \mathrm{TV}(\mu_{\tau}, \mu_{\tau}^{K'})
\mathbb{E}_{\mu^K}[\nu_{\star}^{Q,\mathrm{TI}}] (\mathrm{d}\tau) &&\\
&\leq \frac{1}{\sqrt{2}} \varepsilon_{\star}.
\end{align*}
The other term requires some Lipschitz analysis.
By Pinsker's inequality, for any $\tau$ and $\tau'$,
\begin{align*}
\mathrm{TV}^2(\mu_{\tau},\nu_{\tau'})
&\leq \frac{1}{2}\mathrm{KL}(\nu_{\tau'}\|\mu_{\tau}) &&\\
&= \frac{1}{4}(\tau -\tau')
\mathbb{E}_{\nu_{\tau'}}[R(\bm{L})]
+ \frac{1}{2}\left(\log Z_{\mu_{\tau}} - \log Z_{\nu_{\tau'}} \right) &&\\
&= \frac{1}{4}(\tau - \tau')
\mathbb{E}_{\nu_{\tau'}}[R(\bm{L})]
+ \frac{1}{2}\int_{\tau}^{\tau'} \frac{1}{2}
\mathbb{E}_{\nu_t}[R(\bm{L})]  \mathrm{d}t &&\\
&\leq \frac{1}{2}|\tau-\tau'| \mathbb{E}_{\mu}[R(\bm{L})].
\end{align*}
The last step is because $\mathbb{E}_{\nu_t}[R(\bm{L})]$
is decreasing in $t$, so we can set $t=0$ for an upper bound.
By convexity of TV,
\[\mathrm{TV}\left(\int \mu_{\tau} \nu_{\star} (\mathrm{d}\tau),
\int \mu_{\tau} \mathbb{E}_{\mu^K}[\nu_{\star}^{Q,\mathrm{TI}}] (\mathrm{d}\tau) \right)
\leq \int \mathrm{TV}(\mu_{\tau},\nu_{\tau'})
\gamma(\mathrm{d}\tau, \mathrm{d}\tau')\]
for $\gamma \in \Gamma(\nu_{\star}, \mathbb{E}_{\mu^K}[\nu_{\star}^{Q,\mathrm{TI}}])$, so
\begin{align*}
\mathrm{TV}\left(\int \mu_{\tau} \nu_{\star} (\mathrm{d}\tau),
\int \mu_{\tau} \mathbb{E}_{\mu^K}[\nu_{\star}^{Q,\mathrm{TI}}] (\mathrm{d}\tau) \right)
&\leq \int \sqrt{\frac{1}{2}|\tau-\tau'| \mathbb{E}_{\mu}[R(\bm{L})]}
\gamma(\mathrm{d}\tau, \mathrm{d}\tau') \quad \forall \gamma &&\\
&\leq \sqrt{\int \frac{1}{2}|\tau-\tau'| \mathbb{E}_{\mu}[R(\bm{L})]
\gamma(\mathrm{d}\tau, \mathrm{d}\tau')} \quad \forall \gamma &&\\
\Rightarrow \mathrm{TV} \left(\int \mu_{\tau} \nu_{\star} (\mathrm{d}\tau),
\int \mu_{\tau} \mathbb{E}_{\mu^K}[\nu_{\star}^{Q,\mathrm{TI}}] (\mathrm{d}\tau) \right)
&\leq \sqrt{\frac{1}{2} \mathbb{E}_{\mu}[R(\bm{L})]
W_1(\nu_{\star}, \mathbb{E}_{\mu^K}[\nu_{\star}^{Q,\mathrm{TI}}])}.
\end{align*}
We can bound the residual term by Lemma \ref{lem:ER},
so now we bound the $W_1$ term. We can see that
\[W_1(\nu_{\star}, \mathbb{E}_{\mu^K}[\nu_{\star}^{Q,\mathrm{TI}}])
\leq \mathbb{E}[W_1(\nu_{\star}, \nu_{\star}^{Q,\mathrm{TI}})] \]
by Jensen's inequality, and
\[\mathbb{E}[W_1(\nu_{\star},\nu_{\star}^{Q,\mathrm{TI}})]
\leq W_1(\nu_{\star},\nu^Q_{\star})
+ \mathbb{E}[W_1(\nu^{Q}_{\star},\nu^{Q,\mathrm{TI}}_{\star})]\]
by the triangle inequality. The desired result follows from Lemmas \ref{lem:Q} and \ref{lem:TI}.

\subsection{Proof of Theorem \ref{thm:W1guar}}
By the triangle inequality,
\begin{align*}
&W_1 \left(\int \mu_{\tau} \nu_{\star} (\mathrm{d}\tau),
\int \mu_{\tau}^{K'} \mathbb{E}_{\mu^K}[\nu^{Q,\mathrm{TI}}_{\star}] (\mathrm{d}\tau) \right) &&\\
&\leq W_1 \left(\int \mu_{\tau} \nu_{\star} (\mathrm{d}\tau),
\int \mu_{\tau} \mathbb{E}_{\mu^K}[\nu^{Q,\mathrm{TI}}_{\star}] (\mathrm{d}\tau) \right) &&\\
&\qquad + W_1 \left(
\int \mu_{\tau} \mathbb{E}_{\mu^K}[\nu^{Q,\mathrm{TI}}_{\star}] (\mathrm{d}\tau),
\int \mu_{\tau}^{K'} \mathbb{E}_{\mu^K}[\nu^{Q,\mathrm{TI}}_{\star}] (\mathrm{d}\tau) \right).
\end{align*}
By the convergence guarantee from \cite{Andrieu24} and the Talagrand's T1
inequality, we have
\[W_1^2(\mu_{\tau}, \mu_{\tau}^{K(\tau)})
\leq B \chi^2\!\left(\mu_{\tau}^{K(\tau)} \Big\| \mu_{\tau} \right)
\leq B\varepsilon_{\star}^2,\]
so by definition of Wasserstein distance
\begin{align*}
&W_1\left(
\int \mu_{\tau} \mathbb{E}_{\mu^K}[\nu_{\star}^{Q,\mathrm{TI}}] (\mathrm{d}\tau),
\int \mu_{\tau}^{K'} \mathbb{E}_{\mu^K}[\nu_{\star}^{Q,\mathrm{TI}}] (\mathrm{d}\tau) \right) &&\\
&= \sup_{\mathrm{Lip}(f)<1} \left|
\int \left(\int f \mathrm{d}\mu_{\tau}
- \int f \mathrm{d}\mu_{\tau}^{K'} \right)
\mathbb{E}_{\mu^K}[\nu_{\star}^{Q,\mathrm{TI}}] (\mathrm{d}\tau) \right| &&\\
&\leq \sup_{\mathrm{Lip}(f)<1}
\int \left|\int f \mathrm{d}\mu_{\tau}
- \int f \mathrm{d}\mu_{\tau}^{K'} \right|
\mathbb{E}_{\mu^K}[\nu_{\star}^{Q,\mathrm{TI}}] (\mathrm{d}\tau) &&\\
&\leq \int W_1(\mu_{\tau}, \mu_{\tau}^{K'})
\mathbb{E}_{\mu^K}[\nu_{\star}^{Q,\mathrm{TI}}] (\mathrm{d}\tau) &&\\
&\leq \sqrt{B}\,\varepsilon_{\star}.
\end{align*}
The other term requires some Lipschitz analysis.
By Talagrand's T1 inequality, for any $\tau$ and $\tau'$,
\begin{align*}
W_1^2(\mu_{\tau},\nu_{\tau'})
&\leq \frac{1}{2}\mathrm{KL}(\nu_{\tau'}\|\mu_{\tau}) &&\\
&= \frac{1}{4}(\tau -\tau')
\mathbb{E}_{\nu_{\tau'}}[R(\bm{L})]
+ \frac{1}{2}\left(\log Z_{\mu_{\tau}} - \log Z_{\nu_{\tau'}} \right) &&\\
&= \frac{1}{4}(\tau - \tau')
\mathbb{E}_{\nu_{\tau'}}[R(\bm{L})]
+ \frac{1}{2}\int_{\tau}^{\tau'} \frac{1}{2}
\mathbb{E}_{\nu_t}[R(\bm{L})]  \mathrm{d}t &&\\
&\leq \frac{1}{2}|\tau-\tau'| \mathbb{E}_{\mu}[R(\bm{L})].
\end{align*}
The last step is because $\mathbb{E}_{\nu_t}[R(\bm{L})]$
is decreasing in $t$, so we can set $t=0$ for an upper bound.
By convexity of $W_1^2$,
\[W_1^2\left(\int \mu_{\tau} \nu_{\star} (\mathrm{d}\tau),
\int \mu_{\tau} \mathbb{E}_{\mu^K}[\nu_{\star}^{Q,\mathrm{TI}}] (\mathrm{d}\tau) \right)
\leq \int W_1^2(\mu_{\tau},\nu_{\tau'})
\gamma(\mathrm{d}\tau, \mathrm{d}\tau')\]
for $\gamma \in \Gamma(\nu_{\star}, \mathbb{E}_{\mu^K}[\nu_{\star}^{Q,\mathrm{TI}}])$, so
\begin{align*}
W_1\left(\int \mu_{\tau} \nu_{\star} (\mathrm{d}\tau),
\int \mu_{\tau} \mathbb{E}_{\mu^K}[\nu_{\star}^{Q,\mathrm{TI}}] (\mathrm{d}\tau) \right)
&\leq \int \sqrt{B|\tau-\tau'| \mathbb{E}_{\mu}[R(\bm{L})]}
\gamma(\mathrm{d}\tau, \mathrm{d}\tau') \quad \forall \gamma &&\\
&\leq \sqrt{\int B|\tau-\tau'| \mathbb{E}_{\mu}[R(\bm{L})]
\gamma(\mathrm{d}\tau, \mathrm{d}\tau')} \quad \forall \gamma &&\\
\Rightarrow W_1 \left(\int \mu_{\tau} \nu_{\star} (\mathrm{d}\tau),
\int \mu_{\tau} \mathbb{E}_{\mu^K}[\nu_{\star}^{Q,\mathrm{TI}}] (\mathrm{d}\tau) \right)
&\leq \sqrt{B \mathbb{E}_{\mu}[R(\bm{L})]
W_1(\nu_{\star}, \mathbb{E}_{\mu^K}[\nu_{\star}^{Q,\mathrm{TI}}])}.
\end{align*}
We can bound the residual term by Lemma \ref{lem:ER},
so now we bound the $W_1$ term. We can see that
\[W_1(\nu_{\star}, \mathbb{E}_{\mu^K}[\nu_{\star}^{Q,\mathrm{TI}}])
\leq \mathbb{E}[W_1(\nu_{\star}, \nu_{\star}^{Q,\mathrm{TI}})] \]
by Jensen's inequality, and
\[\mathbb{E}[W_1(\nu_{\star},\nu_{\star}^{Q,\mathrm{TI}})]
\leq W_1(\nu_{\star},\nu^Q_{\star})
+ \mathbb{E}[W_1(\nu^{Q}_{\star},\nu^{Q,\mathrm{TI}}_{\star})]\]
by the triangle inequality. The desired result follows from Lemmas \ref{lem:Q} and \ref{lem:TI}.

\subsection{Proof of Corollary \ref{cor:TV}}

$\lambda = \Theta(n_1^a)$ and $B = \Theta(1)$ imply
\[\log(u_0(t)+1) = \Theta\!\left(n_1^{1/2-a} \sqrt{A(t)} + n_1^{1-2a} \right),\]
and because $A(t) = \Theta(n_1 n_2)$ under our standardization regime, we have
\[\log \log (u_0(t)/2)
= \log\!\left(\Theta\!\left(n_1^{1-a} \sqrt{n_2}\right)\right)
= \Theta(\log(n_1 n_2)).\]
It follows that
$K(t) = \widetilde{O}(n_1^{2-2a} n_2)$.
To get our overall TV distance down to $\eta < 1$,
because of the fact that the $M$-term depends only on the product $QM$
while the other terms depend only on $Q$, we set $M = 1$.
Furthermore, the $M$-term is the slowest to decline, so to find the $Q$ needed to bring us to $\eta$, we choose
\[\sqrt{\frac{N}{Q}} = \Omega\!\left( \frac{\eta^2}{N}\right)
\,\Rightarrow\, Q = O\!\left( \frac{N^3}{\eta^4} \right).\]
If $\varepsilon_{\star} = \Theta(\eta)$,
it follows that the number of RWM steps needed is
\[QMK = \widetilde{O}\!\left( \frac{N^3 n_1^{2-2a} n_2}{\eta^4} \right).\]

\subsection{Proof of Corollary \ref{cor:W1}}

If $B = \Theta(1)$, the convergence result of Theorem \ref{thm:W1guar} is the same in big-O terms as that of Theorem \ref{thm:TVguar}, so we can simply replace $\eta$ with $n_1 n_2 \eta$ in the result of Corollary \ref{cor:TV} and obtain
\[QMK = \widetilde{O}\!\left( \frac{N^3 n_1^{2-2a} n_2}{n_1^4 n_2^4 \eta^4} \right).\]
Because $N \leq n_1 n_2$, we can also say
\[QMK = \widetilde{O}\!\left( \frac{n_1^{1-2a}}{\eta^4} \right).\]

\end{document}